\DeclareMathOperator{\argmin}{argmin}
\DeclareMathOperator{\argmax}{argmax}
\DeclarePairedDelimiter\abs{\lvert}{\rvert}
\newcommand{\pref}{\prettyref}
\newcommand{\abstr}[1]{{\color{red}#1}}
\newcommand{\concr}[1]{{\color{blue}#1}}
\DeclareMathOperator*{\ScaleCols}{ScaleCols}
\DeclareMathOperator*{\PCMs}{PCMs}
\DeclareMathOperator*{\BinPCMs}{BinPCMs}
\newcommand\AbsCollection{\abstr{\mathfrak{A}}}
\newcommand\AbsDomain{\abstr{\mathcal{A}}}
\newcommand\AbsDomaini[1]{\abstr{\AbsDomain^{(#1)}}}
\newcommand\AbsElem{\abstr{A}}
\newcommand\AbsElemi[1]{\abstr{A^{(#1)}}}
\newcommand\ALW{\mathrm{AbstractLayerWise}\langle\abstr{\mathfrak{A}}, \concr{\Sigma}\rangle(\concr{N}, \nping, \AbsDomain)}
\newcommand\aalpha{\abstr{\alpha}}
\newcommand\haalpha{\widehat{\aalpha}}
\newcommand\czeta{\concr{\overline{\textrm{\textgamma}}}}
\newcommand\cgamma{\concr{\gamma}}
\newcommand\csigma{\concr{\sigma}}
\newcommand\csigmap{\concr{\sigma'}}
\newcommand\cSigma{\concr{\Sigma}}
\newcommand\csigmai[1]{\concr{\sigma^{(#1)}}}
\newcommand\csigmaip[1]{\concr{\sigma^{(#1)'}}}
\newcommand\IntDom{\mathtt{Int}}
\newcommand\OctDom{\mathtt{Oct}}
\newcommand\cWi[1]{\concr{W^{(#1)}}}
\newcommand\cWip[1]{\concr{W^{(#1)'}}}
\newcommand\cHi[1]{\concr{H^{(#1)}}}
\newcommand\meanrep[2]{{#1}_{/#2}}
\newcommand\partitioning{\mathcal{P}}
\newcommand\ping{\partitioning}
\newcommand\netpartitioning{\mathbb{P}}
\newcommand\nping{\netpartitioning}
\newcommand\lping[1]{\nping^{(#1)}}
\newcommand\prabhakar{Prabhakar~et~al.}
\newcommand\prabhakarcite{Prabhakar~et~al.~\cite{inn}}
\newcommand\target{arxiv}
\newcommand\onlyfor[3]{\ifthenelse{\equal{#1}{\target}}{#2}{#3}}
\newcommand\appendixproof[1]{
    \begin{proof}
        \onlyfor{arxiv}{
            Please see~\pref{#1} for the proof of this theorem.
        }{
            Please see the arXiv version for the proof of this theorem.
        }
    \end{proof}
}
\begin{document}
\title{Abstract Neural Networks}

\author{Matthew Sotoudeh \and Aditya V.\ Thakur}
\authorrunning{M.\ Sotoudeh and A.\ V.\ Thakur}

\institute{University of California, Davis, USA
\\ \email{\{masotoudeh,avthakur\}@ucdavis.edu}}

\maketitle              %
\begin{abstract}
    Deep Neural Networks (DNNs) are rapidly being applied to safety-critical
    domains such as drone and airplane control, motivating techniques for
    verifying the safety of their behavior. Unfortunately, DNN verification is
    NP-hard, with current algorithms slowing exponentially with the number of
    nodes in the DNN. 
    This paper introduces the notion of Abstract Neural Networks (ANNs), which 
    can be used to soundly overapproximate DNNs while using fewer nodes. 
    An ANN is like a DNN except weight matrices are replaced by values in
    a given abstract domain. We present a framework parameterized by the
    abstract domain and activation functions used in the DNN that can be used
    to construct a corresponding ANN. We present necessary and
    sufficient conditions on the DNN activation functions for the constructed
    ANN to soundly over-approximate the given DNN. Prior work on DNN
    abstraction was restricted to the interval domain and ReLU activation
    function. Our framework can be instantiated with other abstract domains
    such as octagons and polyhedra, as well as other activation functions such
    as Leaky ReLU, Sigmoid, and Hyperbolic Tangent.

    Code: \url{https://github.com/95616ARG/abstract_neural_networks}

    \keywords{Deep Neural Networks \and Abstraction \and Soundness.}
\end{abstract}

\section{Introduction}
\label{sec:Introduction}
Deep Neural Networks (DNNs), defined formally in~\pref{sec:Preliminaries},
are loop-free computer programs organized into \emph{layers}, each of which
computes a linear combination of the layer's inputs, then applies some
\emph{non-linear activation function} to the resulting values. The activation
function used varies between networks, with popular activation functions
including ReLU, Hyperbolic Tangent, and Leaky
ReLU~\cite{Goodfellow:DeepLearning2016}. DNNs have rapidly become important in
a variety of applications, including image recognition and safety-critical
control systems, motivating research into the problem of verifying properties
about their behavior~\cite{reluplex,ai2}.

Although they lack loops, the use of non-linear activation functions introduces
\emph{exponential branching behavior} into the DNN semantics. It has
been shown that DNN verification is NP-hard~\cite{reluplex}. In particular, this
exponential behavior scales with the number of \emph{nodes} in a network.  DNNs
in practice have very large numbers of nodes, e.g., the aircraft
collision-avoidance DNN ACAS Xu~\cite{julian2016policy} has $300$ and a modern
image recognition network has tens of thousands~\cite{krizhevsky2012imagenet}.
The number of nodes in modern networks has also been growing with time as more
effective training methods have been found~\cite{radford2019language}.

One increasingly common way of addressing this problem is to compress the DNN
into a smaller proxy network which can be analyzed in its place. However, most
such approaches usually do not guarantee that properties of the proxy network
hold in the original network (they are unsound).  Recently,
\prabhakarcite{} introduced the notion of \emph{Interval Neural
Networks} (INNs), which can produce a smaller proxy network that is
\emph{guaranteed} to over-approximate the behavior of the original DNN. While
promising, soundness is only guaranteed with a particular activation function
(ReLU) and abstract domain (intervals).

In this work, we introduce \emph{Abstract Neural Networks} (ANNs), which are
like DNNs except weight matrices are replaced with values in an abstract
domain. Given a DNN and an abstract domain, we present an algorithm for
constructing a corresponding ANN with fewer nodes. The algorithm works by
merging groups of nodes in the DNN to form corresponding abstract nodes in the
ANN. We prove necessary and sufficient conditions on the activation functions
used for the constructed ANN to over-approximate the input DNN.  If these
conditions are met, the smaller ANN can be soundly analyzed in place of the
DNN.  Our formalization and theoretical results generalize those of \prabhakarcite{},
which are an instantiation of our framework for ReLU activation functions and
the interval domain. Our results also show
how to instantiate the algorithm such that sound abstraction can be achieved
with a variety of different abstract domains (including polytopes and octagons)
as well as many popular activation functions (including Hyperbolic Tangent,
Leaky ReLU, and Sigmoid).

\subsubsection{Outline}
In this paper, we aim to lay strong theoretical foundations for research into
abstracting neural networks for verification. \pref{sec:Overview} gives an
overview of our technique. \pref{sec:Preliminaries} defines preliminaries.
\pref{sec:ANNs} defines \emph{Abstract Neural Networks} (ANNs).
\pref{sec:Algorithm} presents an algorithm for constructing an ANN from a given
DNN. \pref{sec:Examples} motivates our theoretical results with a number of
examples. \pref{sec:ProveSound} proves our soundness theorem.
\pref{sec:Related} discusses related work, while \pref{sec:Conclusion}
concludes with a discussion of future work.

\section{Motivation}
\label{sec:Overview}

\begin{figure}
    \begin{subfigure}{0.32\textwidth}
        \begin{tikzpicture}
            \draw node[circle,draw=black] (x1) at (0, 0) {$x_1$};
            \draw node[circle,draw=black] (h1) at (1.5, 0.75) {$h_1$};
            \draw node[circle,draw=black] (h2) at (1.5, -0.75) {$h_2$};
            \draw node[circle,draw=black] (y1) at (3, 1) {$y_1$};
            \draw node[circle,draw=black] (y2) at (3, 0) {$y_2$};
            \draw node[circle,draw=black] (y3) at (3, -1) {$y_3$};

            \draw[->] (x1) -- (h1) node[midway,above] {$1$};
            \draw[->] (x1) -- (h2) node[midway,below] {$-1$};

            \draw[->] (h1) -- (y1) node[midway,above,xshift=1mm,yshift=1mm] {$1$};
            \draw[->] (h1) -- (y2) node[near end,above,xshift=2mm,yshift=-1mm] {$1$};
            \draw[->] (h1) -- (y3) node[near end,below] {$0$};

            \draw[->] (h2) -- (y1) node[near end,above] {$1$};
            \draw[->] (h2) -- (y2) node[near end,above] {$0$};
            \draw[->] (h2) -- (y3) node[midway,below] {$1$};
        \end{tikzpicture}
        \caption{DNN $N_1$}
        \label{fig:OverviewDNN}
    \end{subfigure}
    \hfill
    \begin{subfigure}{0.32\textwidth}
        \begin{tikzpicture}
            \draw node[circle,draw=black] (x1) at (0, 0) {$x_1$};
            \draw node[circle,draw=black] (h) at (1.5, 0) {$\overline{h}$};
            \draw node[circle,draw=black] (y1) at (3, 1) {$y_1$};
            \draw node[circle,draw=black] (y2) at (3, 0) {$y_2$};
            \draw node[circle,draw=black] (y3) at (3, -1) {$y_3$};

            \draw[->] (x1) -- (h) node[midway,above] {\scriptsize$[-1, 1]$};

            \draw[->] (h) -- (y1) node[midway,above] {\scriptsize$[2, 2]$};
            \draw[->] (h) -- (y2) node[near end,above,xshift=-1mm] {\scriptsize$[0, 2]$};
            \draw[->] (h) -- (y3) node[near end,below,xshift=-2mm] {\scriptsize$[0, 2]$};
        \end{tikzpicture}
        \caption{Corresponding INN}
        \label{fig:OverviewINN}
    \end{subfigure}
    \hfill
    \begin{subfigure}{0.32\textwidth}
        \begin{tikzpicture}
            \draw node[circle,draw=black] (x1) at (0, 0) {$x_1$};
            \draw node[circle,draw=black] (h) at (1.5, 0) {$\overline{h}$};
            \draw node[circle,draw=black] (y1) at (3, 1) {$y_1$};
            \draw node[circle,draw=black] (y2) at (3, 0) {$y_2$};
            \draw node[circle,draw=black] (y3) at (3, -1) {$y_3$};

            \draw[->] (x1) -- (h) node[midway,above] {$0.5$};

            \draw[->] (h) -- (y1) node[midway,above] {$2$};
            \draw[->] (h) -- (y2) node[midway,above] {$2$};
            \draw[->] (h) -- (y3) node[midway,above] {$0$};
        \end{tikzpicture}
        \caption{One instantiation of the INN}
        \label{fig:OverviewInst}
    \end{subfigure}
    \caption{Example DNN to INN and one of many instantiations of the INN.}
\end{figure}
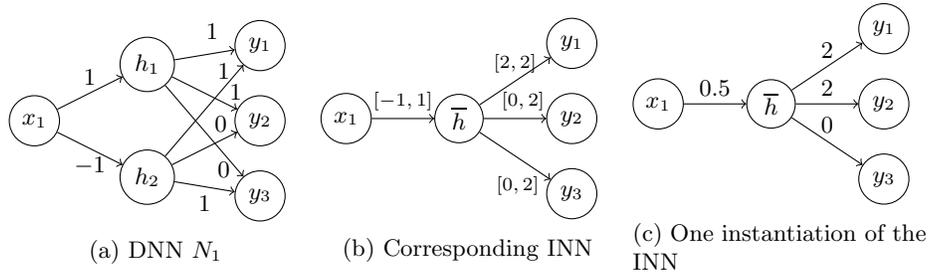

DNNs are often denoted by a graph of the form shown in~\pref{fig:OverviewDNN}.
The input node $x_1$ is assigned the \emph{input value}, then the values of
$h_1$ and $h_2$ are computed by first a linear combination of the values of the
previous layer (in this case $x_1$) followed by some \emph{non-linear
activation function.} The behavior of the network is dependent on the
non-linear activation function used. We will assume that the output layer with nodes
$y_1$, $y_2$, and $y_3$ uses the identity activation function $I(x) = x$. For
the hidden layer with nodes $h_1$ and $h_2$ we will consider two scenarios,
each using one of the following two activation functions:
\[
    \sigma(x) =
    \begin{cases}
        x &\text{if } x \geq 0 \\
        0 &\text{otherwise.}
    \end{cases}
    \qquad
    \phi(x) =
    \begin{cases}
        x &\text{if } x \geq 0 \\
        0.5x &\text{otherwise.}
    \end{cases}.
\]

Using $\sigma$ as the activation function for the hidden layer, when $x_1 = 1$
we have $h_1 = \sigma(1x_1) = 1$ and $h_2 = \sigma(-1x_1) = 0$. That in turn
gives us $y_1 = I(1h_1 + 1h_2) = 1$, $y_2 = I(1h_1 + 0h_2) = 1$, and $y_3 =
I(0h_1 + 1h_2) = 0$.

Using $\sigma$ as the activation function for the hidden layer, when $x_1 = 1$, we have
\begin{align*} 
    h_1 &= \sigma(1x_1) = 1 & h_2 &= \sigma(-1x_1) = 0 &  &\\
    y_1 &= I(1h_1 + 1h_2) = 1 & y_2 &= I(1h_1 + 0h_2) = 1 & y_3 &= I(0h_1 + 1h_2) = 0.
\end{align*}

Using $\phi$ as the activation function for the hidden layer, when $x_1 = 1$, we have
\begin{align*}
    h_1 &= \phi(1) = 1 & h_2 &= \phi(-1) = -0.5 & & \\
    y_1 &= 0.5 & y_2 &= 1 & y_3 &=-0.5.
\end{align*}

\subsection{Merging Nodes}
Our goal is to \emph{merge} nodes and their corresponding weights in this DNN
to produce a smaller network that over-approximates the behavior of the
original one. One way of doing this was proposed by~\prabhakarcite{}, where nodes
within a layer can be merged and the \emph{weighted interval hull} of their
edge weights is taken. For example, if we merge all of the $h_i$ nodes together
into a single $\overline{h}$ node, this process results in an \emph{Interval
Neural Network} (INN) shown in~\pref{fig:OverviewINN}.

Intuitively, given this new INN we can form a \emph{DNN instantiation} by
picking any weight within the interval for each edge. We can then find the
output of this DNN instantiation on, say, $x_1 = 1$. We take the \emph{output
of the INN} on an input $x_1$ to be the set of \emph{all} such $(y_1, y_2,
y_3)$ triples outputted by some such instantiated DNN on $x_1$.

For example, we can take the instantiation in~\pref{fig:OverviewInst}.  Using
the $\sigma$ activation function, this implies $(y_1 = 1, y_2 = 1, y_3 = 0)$ is
in the output set of the INN on input $x_1 = 1$. In fact, the results
of~\prabhakarcite{} show that, if the $\sigma$ activation function is used, then for
\emph{any} input $x_1$ we will have some assignment to the weights which
produces the same output as the original DNN (although many assignments will
produce different outputs --- the output set is an \emph{over-approximation} of
the behavior of the original network).

However, something different happens if the network were using the $\phi$
activation function, a case that was not considered by~\prabhakarcite{}. In that
scenario, the original DNN had an output of $(0.5, 1, -0.5)$, so if the INN
were to soundly over-approximate it there would need to be some instantiation
of the weights where $y_1$ and $y_3$ could have opposite signs. But this cannot
happen --- both will have the same (or zero) sign as $\overline{h}$!

These examples highlight the fact that the soundness of the algorithm
from~\prabhakarcite{} is specific to the ReLU activation function ($\sigma$ above)
and Interval abstract domain. Their results make no statement about whether
INNs over-approximate DNNs using different activation functions (such as $\phi$
above), or if abstractions using different domains (such as the \emph{Octagon}
Neural Networks defined in~\pref{def:ONNs}) also permit sound DNN
over-approximation.

This paper develops a general framework for such DNN abstractions,
parameterized by the abstract domain and activation functions used. In this
framework, we prove \emph{necessary and sufficient} conditions on the
activation functions for a \emph{Layer-Wise Abstraction Algorithm} generalizing
that of~\prabhakarcite{} to produce an ANN soundly over-approximating the given DNN.
Finally, we discuss ways to modify the abstraction algorithm in order to
soundly over-approximate common DNN architectures that fail the necessary
conditions, extending the applicability of model abstraction to almost all
currently-used DNNs.

These results lay a solid theoretical foundation for research into Abstract
Neural Networks. Because our algorithm and proofs are parameterized by the
abstract domain and activation functions used, our proofs allow practitioners
to experiment with different abstractions, activation functions, and
optimizations without having to re-prove soundness for their particular
instantiation (which, as we will see in~\pref{sec:ProveSound}, is a
surprisingly subtle process).

\section{Preliminaries}
\label{sec:Preliminaries}

In this section we define Deep Neural Networks and a number of commonly-used
activation functions.

\subsection{Deep Neural Networks}
In~\pref{sec:Overview}, we represented neural networks by \emph{graphs.} While
this is useful for intuition, in~\pref{sec:ANNs} we will talk about, e.g.,
\emph{octagons of layer weight matrices}, for which the graph representation
makes significantly less intuitive sense. Hence, for the rest of the paper we
will use an entirely equivalent \emph{matrix representation} for DNNs, which
will simplify the definitions, intuition, and proofs considerably.  With this
notation, we think of nodes as \emph{dimensions} and layers of nodes as
\emph{intermediate spaces.} We then define a \emph{layer} to be a
transformation from one intermediate space to another.

\begin{definition}
    A \emph{DNN layer from $n$ to $m$ dimensions} is a tuple $(\concr{W},
    \csigma)$ where $\concr{W}$ is an $m\times n$ matrix and $\csigma :
    \mathbb{R}\to \mathbb{R}$ is an arbitrarily-chosen \emph{activation
    function}.
\end{definition}

We will often abuse notation such that, for a vector $\vec{v}$,
$\csigma(\vec{v})$ is the vector formed by applying $\csigma$ to each
component of $\vec{v}$.

\begin{definition}
    A \emph{Deep Neural Network (DNN) with layer sizes $s_0, s_1, \ldots, s_n$}
    is a collection of $n$ DNN layers $(\cWi{1}, \csigmai{1}), \ldots,
    (\cWi{n}, \csigmai{n})$, where the $(\cWi{i}, \csigmai{i})$ layer is from
    $s_{i-1}$ to $s_i$ dimensions.
\end{definition}

Every DNN has a corresponding \emph{function,} defined below.
\begin{definition}
    \label{def:DNN}
    Given a DNN from $s_0$ to $s_n$ dimensions with layers $(\cWi{i},
    \csigmai{i})$, the \emph{function corresponding to the DNN} is the
    function $f : \mathbb{R}^{s_0} \to \mathbb{R}^{s_n}$ given by $f(\vec{v}) =
    \vec{v}^{(n)}$, where $\vec{v}^{(i)}$ is defined inductively by
    $\vec{v}^{(0)} = \vec{v}$ and $\vec{v}^{(i)} =
    \csigmai{i}(\cWi{i}(\vec{v}^{(i-1)}))$.
\end{definition}
Where convenient, we will often refer to the corresponding function as the DNN
or vice-versa.

\begin{example}
    The DNN $N_1$ from~\pref{fig:OverviewDNN}, when using the $\sigma$ hidden-layer
    activation function, is represented by the layers
    $\left({\scriptsize\begin{bmatrix} 1 \\ -1 \end{bmatrix}}, \sigma\right)$
    and
    $\left({\scriptsize\begin{bmatrix} 1 & 1 \\ 1 & 0 \\ 0 & 1 \end{bmatrix}}, I\right)$.
    The \emph{function corresponding to the DNN} is given by $N_1(x_1) =
    {\scriptsize\begin{bmatrix}
        1 & 1 \\
        1 & 0 \\
        0 & 1
    \end{bmatrix}}
    \sigma\left(
    {\scriptsize
        \begin{bmatrix}
            1 \\
            -1
        \end{bmatrix}
        \begin{bmatrix}
            x_1
        \end{bmatrix}}
    \right)$.
\end{example}

\subsection{Common Activation Functions}
There are a number of commonly-used activation functions, listed below.
\begin{definition}
    \label{def:Activations}
    The \emph{Leaky Rectified Linear Unit} (LReLU)~\cite{maasrectifier},
    \emph{Rectified Linear Unit} (ReLU), \emph{Hyperbolic Tangent} ($\tanh$),
    and \emph{Threshold} ($\mathrm{thresh}$) activation functions are defined:
    \[
        \begin{aligned}
            \mathrm{LReLU}(x; \mathtt{c}) &\coloneqq
            \begin{cases}
                x &x \geq 0 \\
                \mathtt{c}x &x < 0
            \end{cases},
            &&\qquad\mathrm{ReLU}(x) \coloneqq \mathrm{LReLU}(x; 0),
            \\
            \tanh &\coloneqq \frac{e^{2x}-1}{e^{2x} + 1},
            &&\mathrm{thresh}(x; \mathtt{t}, \mathtt{v}) \coloneqq
            \begin{cases}
                x &\text{if } x \geq \mathtt{t} \\
                \mathtt{v} &\text{otherwise}
            \end{cases}.
        \end{aligned}
    \]
    Here $\mathrm{LReLU}$ and $\mathrm{thresh}$ actually represent
    \emph{families} of activation functions parameterized by the constants $\mathtt{c},
    \mathtt{t}, \mathtt{v}$. The constants used varies between networks. $\mathtt{c} = 0$ is a common
    choice for the $\mathrm{LReLU}$ parameter, hence the explicit definition of
    $\mathrm{ReLU}$.
\end{definition}

All of these activation functions are present in standard deep-learning
toolkits, such as Pytorch~\cite{Pytorch}. Libraries such as Pytorch also enable
users to implement new activation functions. This variety of activation
functions used in practice will motivate our study of necessary and sufficient
conditions on the activation function to permit sound over-approximation.

\section{Abstract Neural Networks}
\label{sec:ANNs}

\newcommand\pr[2]{\mathcal{P}(\mathbb{R}^{#1\times #2})}

In this section, we formalize the syntax and semantics of Abstract Neural Networks (ANNs).
We also present two types of ANNs: Interval Neural Networks (INNs) and Octagon Neural Networks (ONNs). 

An ANN is like a DNN except the weights in each layer are represented by 
an abstract value in some abstract domain. This is formalized below.

\begin{definition}
    An \emph{$n\times m$ weight set abstract domain} is a lattice $\AbsDomain$
    with Galois connection $(\aalpha_{\AbsDomain}, \cgamma_{\AbsDomain})$ with
    the powerset lattice $\pr{n}{m}$ of $n\times m$ matrices.
\end{definition}

\begin{definition}
    An \emph{ANN layer from $n$ to $m$ dimensions} is a triple $(\AbsDomain,
    \AbsElem, \csigma)$ where $\AbsElem$ is a member of the weight set
    abstraction $\AbsDomain$ and $\csigma : \mathbb{R}\to \mathbb{R}$ is an
    arbitrarily-chosen \emph{activation function}.
\end{definition}

\noindent Thus, we see that each ANN layer $(\AbsDomain, \AbsElem, \csigma)$ is associated
with a set of weights $\cgamma_{\AbsDomain}(\AbsElem)$. Finally, we can
define the notion of an ANN:
\begin{definition}
    An \emph{Abstract Neural Network (ANN) with layer sizes $s_0, s_1, \ldots,
    s_n$} is a collection of $n$ ANN layers $(\AbsDomaini{i}, \AbsElemi{i},
    \csigmai{i})$, where the $i$th layer is from $s_{i-1}$ to $s_i$ dimensions.
\end{definition}

We consider the output of the ANN to be the set of outputs of all
\emph{instantiations} of the ANN into a DNN, as illustrated
in~\pref{fig:ANNSemantics}.
\begin{definition}
    We say a DNN with layers $(\cWi{i}, \csigmai{i})$ is \emph{an
    instantiation of} an ANN $\abstr{T}$ with layers $(\AbsDomaini{i},
    \AbsElemi{i}, \csigmai{i})$ if each $\cWi{i} \in
    \cgamma_{\AbsDomaini{i}}(\AbsElemi{i})$.  The set of all DNNs that are
    instantiations of an ANN $\abstr{T}$ is given by $\cgamma(\abstr{T})$.
\end{definition}

The semantics of an ANN naturally lift those of the DNN instantiations.
\begin{definition}
    \label{def:ANNSemantics}
    For an ANN $\abstr{T}$ from $s_0$ to $s_n$ dimensions, the \emph{function
    corresponding to $\abstr{T}$} is the set-valued function $\abstr{T} :
    \mathbb{R}^{s_0} \to \mathcal{P}(\mathbb{R}^{s_n})$ defined by
    \\
    $\abstr{T}(\vec{v}) \coloneqq \{ \concr{g}(\vec{v}) \mid \concr{g} \in
    \cgamma(\abstr{T}) \}$.
\end{definition}

\newcommand\tikzwv[2]{
    \node (w#1) at (#2, 0) {$\vec{w}^{(#1)}$};
    \node (v#1) at (#2+1, 0) {$\vec{v}^{(#1)}$};
}
\newcommand\instrow[2]{
    \node (v_#2_0) at (0, #1) {$\vec{v}$};
    \foreach \i in {1,2,...,3}{
        \node (w_#2_\i) at (4*\i-2, #1) {$\vec{w}^{(#2, \i)}$};
        \node (v_#2_\i) at (4*\i,   #1) {$\vec{v}^{(#2, \i)}$};
        \draw[->] (v_#2_\number\numexpr \i-1 \relax) -- (w_#2_\i) node[midway,below] (H_#2_\i) {$\concr{H^{(#2, \i)}}$};
        \draw[->] (w_#2_\i) -- (v_#2_\i) node[midway,below] (s_#2_\i) {$\csigmai{\i}$};

    }
    \node[yshift=-1mm] (v_#2_inT) at (13, #1) {$\in \abstr{T}(\vec{v})$};
}
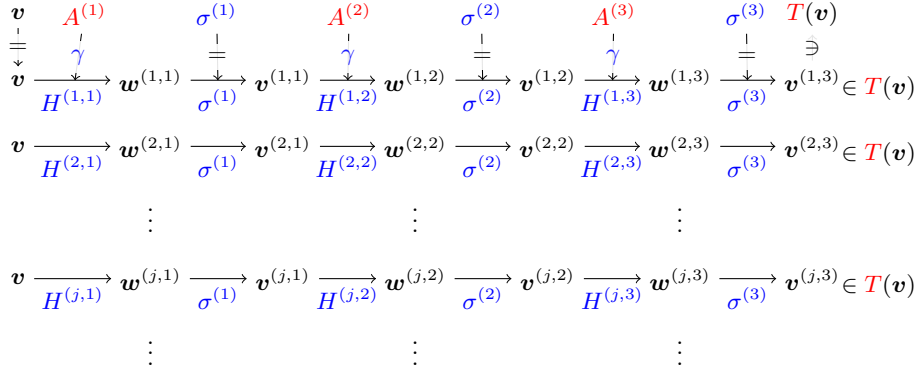
\begin{figure}[t]
    \centering
\begin{tikzpicture}[scale=0.88]
    \node (v0) at (0, 0) {$\vec{v}$};

    \foreach \i in {1,2,...,3}{
        \node (A\i) at (4*\i-3, 0) {$\AbsElemi{\i}$};
        \node (s\i) at (4*\i-1, 0) {$\csigmai{\i}$};
    }

    \instrow{-1}{1}
    \instrow{-2}{2}
    \instrow{-4}{j}
    \node (vdots) at (6, -3) {$\vdots$};
    \node (vdots) at (2, -3) {$\vdots$};
    \node (vdots) at (10, -3) {$\vdots$};
    \node (vdots) at (6, -5) {$\vdots$};
    \node (vdots) at (2,  -5) {$\vdots$};
    \node (vdots) at (10, -5) {$\vdots$};

    \draw[->] (v0) -- (v_1_0)
    node[midway,fill=white,text opacity=1,opacity=0.9] {$=$};
    \foreach \i in {1,2,...,3}{
        \draw[->] (A\i) -- (H_1_\i)
        node[midway,fill=white,text opacity=1,opacity=0.9] {$\cgamma$};
        \draw[->] (s\i) -- (s_1_\i)
        node[midway,fill=white,text opacity=1,opacity=0.9] {$=$};
    }

    \node (out) at (12, 0) {$\abstr{T}(\vec{v})$};
    \draw[->] (v_1_3) -- (out)
    node[midway,fill=white,text opacity=1,opacity=0.9] {$\ni$};
\end{tikzpicture}
    \caption{
        Visualization of ANN semantics for a 3-layer ANN $\abstr{T}$ (first
        row). Different DNN \emph{instantiations} (other rows)
        of $\abstr{T}$ are formed by replacing each abstract weight matrix $\AbsElemi{i}$
        by some concrete weight matrix $\concr{H^{(j, i)}} \in
        \cgamma(\AbsElemi{i})$. $\vec{v}^{(j, 3)}$ is the output of each
        instantiation on the input $\vec{v}$.
        The set of all such outputs producable by some valid
        instantiation is taken to be the output $\abstr{T}(\vec{v})$ of the ANN
        on vector $\vec{v}$.
    }
    \label{fig:ANNSemantics}
\end{figure}

Space constraints prevent us from defining a full Galois
connection here, however one can be established between the lattice of ANNs of
a certain \emph{architecture} and the powerset of DNNs of the same
architecture.

The definition of an ANN above is agnostic to the actual abstract domain(s)
used. For expository purposes, we now define two particular types of ANNs:
\emph{Interval Neural Networks} (INNs) and \emph{Octagon Neural Networks}
(ONNs).

\begin{definition}
    \label{def:INNs}
    An \emph{Interval Neural Network (INN)} is an ANN with layers \\
    $(\AbsDomaini{i}, \AbsElemi{i}, \csigmai{i})$, where each $\AbsDomaini{i}$
    is an \emph{interval hull domain}~\cite{DBLP:conf/popl/CousotC77}. The
    interval hull domain represents sets of matrices by their
    \emph{component-wise interval hull.}
\end{definition}
Notably, the definition of INN in \prabhakarcite{} is equivalent to the above, except
that they further assume every activation function $\csigmai{i}$ is the
$\mathrm{ReLU}$ function.

\begin{example}
    We first demonstrate the interval hull domain:
    $\cgamma_{\IntDom}\left(
        {\scriptsize\begin{bmatrix}
            [-1, 1] & [0, 2] \\
            [-3, -2] & [1, 2] \\
        \end{bmatrix}}
    \right)
    =
    \left\{
        {\scriptsize\begin{bmatrix}
            a & b \\
            c & d \\
        \end{bmatrix}
        \mid
        a \in [-1, 1], b \in [0, 2],
        c \in [-3, -2], d \in [1, 2]}
    \right\}$.
    We can thus define a two-layer INN
    $f(\vec{v})
    \coloneqq
    \begin{bmatrix}
        [0, 1] & [0, 1] \\
    \end{bmatrix}
    \mathrm{ReLU}\left(
    {\scriptsize
    \begin{bmatrix}
        [-1, 1] & [0, 2] \\
        [-3, -2] & [1, 2] \\
    \end{bmatrix}}
    \vec{v}\right)$.
    We can instantiate this network in a variety of ways, for example
    $g(\vec{v}) \coloneqq
    \begin{bmatrix}
        0.5 & 1 \\
    \end{bmatrix}
    \mathrm{ReLU}\left(
    {\scriptsize\begin{bmatrix}
        0 & 2 \\
        -2.5 & 1.5 \\
    \end{bmatrix}}
    \vec{v}\right) \in \cgamma(f)$.
    Taking arbitrarily $(1, 1)^T$ as an example input, we have
    $g((1, 1)^T) = \begin{bmatrix} 1 \end{bmatrix} \in f((1, 1)^T)$.
    In fact, $f((1, 1)^T)$ is the set of \emph{all} values that can be achieved
    by such instantiations, which in this case is the set given by
    $f((1, 1)^T) = \begin{bmatrix} [0, 3] \end{bmatrix}$.
\end{example}

\begin{definition}
    \label{def:ONNs}
    An \emph{Octagon Neural Network (ONN)} is an ANN with layers
    $(\AbsDomaini{i}, \AbsElemi{i}, \csigmai{i})$, where each
    $\AbsDomaini{i}$ is an \emph{octagon hull domain}~\cite{mine2006octagon}.
    The octagon hull domain represents sets of matrices by octagons in the
    space of their components.
\end{definition}

\begin{example}
    Octagons representing a set of $n\times m$ matrices can be thought of
    exactly like an octagon in the vector space $\mathbb{R}^{n\cdot m}$.
    Unfortunately, this is particularly difficult to visualize in higher
    dimensions, hence in this example we will stick to the case where $nm = 2$.

    Let $O_1, O_2$ be octagons such that
    \[
        \begin{aligned}
            \cgamma_{\OctDom}(O_1) &=
            \left\{
                \begin{bmatrix}
                    a \\
                    b
                \end{bmatrix}
                \mid
                a - b \leq 1, -a + b \leq 1, a + b \leq 2, -a - b \leq 2
            \right\},
            \\
            \cgamma_{\OctDom}(O_2) &=
            \left\{
                \begin{bmatrix}
                    a & b \\
                \end{bmatrix}
                \mid
                a - b \leq 2, -a + b \leq 3, a + b \leq 4, -a - b \leq 5
            \right\}.
        \end{aligned}
    \]
    We can thus define a two-layer ONN $f(\vec{v}) \coloneqq O_2
    \mathrm{ReLU}\left( O_1 \vec{v}\right)$.  One instantiation 
    of this ONN $f$ is the DNN
    $g(\vec{v}) \coloneqq
    \begin{bmatrix}
        3 & 1 \\
    \end{bmatrix}
    \mathrm{ReLU}\left(
    {\scriptsize
    \begin{bmatrix}
        0.5 \\
        1.5 \\
    \end{bmatrix}}
    \vec{v}\right) \in \cgamma(f)$.
    We can confirm that $g(1) = \begin{bmatrix} 3 \end{bmatrix} \in f(1)$.
\end{example}

We can similarly define Polyhedra Neural Networks (PNNs) using the polyhedra
domain~\cite{DBLP:conf/popl/CousotH78}.

\section{Layer-Wise Abstraction Algorithm}
\label{sec:Algorithm}
Given a large DNN, how might we construct a smaller ANN which soundly
\emph{over-approximates} that DNN? We define over-approximation formally below.
\begin{definition}
    An ANN $\abstr{T}$ \emph{over-approximates} a DNN $\concr{N}$ if, for every
    $\vec{v} \in \mathbb{R}^n$, $\concr{N}(\vec{v}) \in \abstr{T}(\vec{v})$.
\end{definition}

\begin{remark}
    By~\pref{def:ANNSemantics}, then, $\abstr{T}$ over-approximates $\concr{N}$
    if, for every $\vec{v}$ we can find some instantiation $\concr{T_{\vec{v}}}
    \in \cgamma(\abstr{T})$ such that $\concr{T_{\vec{v}}}(\vec{v}) =
    \concr{N}(\vec{v})$.
\end{remark}

\pref{alg:quotient} constructs a small ANN that, under certain assumptions
discussed in~\pref{sec:Overview}, soundly over-approximates the large DNN
given. The basic idea is to \emph{merge} groups of dimensions together, forming
an ANN where each dimension in the ANN represents a collection of dimensions in
the original DNN.
We formalize the notion of ``groups of dimensions'' as a \emph{layer-wise
partitioning.}
\begin{definition}
    Given a DNN with layer sizes $s_0, s_1, \ldots, s_n$, a \emph{layer-wise
    partitioning $\nping$ of the network} is a set of partitionings $\lping{0},
    \lping{1}, \ldots, \lping{n}$ where each $\lping{i}$ partitions $\{ 1, 2,
    \ldots, s_i \}$. For ease of notation, we will write partitionings with set
    notation but assume they have some intrinsic ordering for indexing.
\end{definition}

\begin{remark}
    To maintain the same number of input and output dimensions in our ANN and
    DNN, we assume $\lping{0} = \{\{1\}, \{2\}, \ldots, \{s_0\}\}$ and
    $\lping{n} = \{\{1\}, \{2\}, \ldots, \{s_n\}\}$.
\end{remark}

\begin{example}
    \label{exa:Partitioning}
    Consider the DNN corresponding to the function

    $f(x_1) =
    {\scriptsize\begin{bmatrix}
        1 & 1 \\
        1 & 0 \\
        0 & 1
    \end{bmatrix}}
    \mathrm{ReLU}\left(
        {\scriptsize\begin{bmatrix}
            1 \\
            -1
        \end{bmatrix}
        \begin{bmatrix}
            x_1
        \end{bmatrix}}
    \right)$.
    The layer sizes are $s_0 = 1, s_1 = 2, s_2 = 3$. Hence, one valid layer-wise
    partitioning is to merge the two inner dimensions:
    $\lping{0} = \{ \{1\} \}
    \quad
    \lping{1} = \{ \{1, 2\} \}
    \quad
    \lping{2} = \{ \{1\}, \{2\}, \{3\} \}$.
    Here we have, e.g., $\lping{0}_1 = \{1\}$, $\lping{1}_1 = \{1, 2\}$, and
    $\lping{2}_3 = \{3\}$.
\end{example}

\SetKw{returnKw}{return}
\begin{figure}[t]
    \small
    \begin{minipage}{0.48\textwidth}
    \begin{algorithm}[H]
        \DontPrintSemicolon
        \KwIn{
            Matrix $M$. Partitionings $\ping^{in}$, $\ping^{out}$ with
            $\abs{\ping^{in}} = k$. Abstract domain $\AbsDomain$.
        }
        \KwOut{
            Abstract element representing all merges of $M$.
        }
        $S \gets \{ \}$\;
        $w \gets (\abs{\ping^{in}_1}, \abs{\ping^{in}_{2}}, \ldots, \abs{\ping^{in}_{k}})$\;
        \For{$C \in \PCMs(\ping^{in})$}{
            \For{$D \in \PCMs(\ping^{out})$}{
                $S \gets S \cup \{\ScaleCols(D^T M C, w))\}$\;
            }
        }
        \returnKw{$\aalpha_{\AbsDomain}(S)$}\;

        \caption{$\haalpha(M, \ping^{in}, \ping^{out}, \AbsDomain)$}
        \label{alg:ahat}
    \end{algorithm}
    \end{minipage}
    \begin{minipage}{0.48\textwidth}
    \begin{algorithm}[H]
        \DontPrintSemicolon
        \KwIn{
            Matrix $M$. Partitionings $\ping^{in}$, $\ping^{out}$ with
            $\abs{\ping^{in}} = k$. Abstract domain $\AbsDomain$.
        }
        \KwOut{
            Abstract element representing all binary merges of $M$
        }
        $S \gets \{ \}$\;
        $w \gets (\abs{\ping^{in}_1}, \abs{\ping^{in}_{2}}, \ldots, \abs{\ping^{in}_{k}})$\;
        \For{$C \in \BinPCMs(\ping^{in})$}{
            \For{$D \in \BinPCMs(\ping^{out})$}{
                $S \gets S \cup \{\ScaleCols(D^T M C, w))\}$\;
            }
        }
        \returnKw{$\aalpha_{\AbsDomain}(S)$}\;

        \caption{$\haalpha_{bin}(M, \ping^{in}, \ping^{out}, \AbsDomain)$}
        \label{alg:ahatbin}
    \end{algorithm}
    \end{minipage}
\vspace{-2ex}
\end{figure}
\begin{figure}[t]
    \small
\begin{algorithm}[H]
    \DontPrintSemicolon
    \KwIn{
        DNN $\concr{N}$ consisting of $n$ layers $(\cWi{i}, \csigmai{i})$
        with each $\csigmai{i} \in \cSigma$.
        Layer-wise partitioning $\nping$ of $\concr{N}$.
        List of $n$ abstract weight domains $\AbsDomaini{i} \in \AbsCollection$.
    }
    \KwOut{
        An ANN with layers $(\AbsDomaini{i}, \AbsElemi{i}, \csigmai{i})$ where
        $\AbsElemi{i} \in \AbsDomaini{i} \in \AbsCollection$.
    }
    $\abstr{A} \gets [\quad]$\;
    \For{$i \in \{1, 2, \ldots, n\}$}{
        $\AbsElemi{i} \gets \haalpha(\cWi{i}, \lping{i-1}, \lping{i}, \AbsDomaini{i})$\;
        $\abstr{A}.\mathrm{append}\big((\AbsDomaini{i}, \AbsElemi{i}, \csigmai{i})\big)$\;
    }
    \returnKw{$\abstr{A}$}\;

    \caption{$\ALW$}
    \label{alg:quotient}
\end{algorithm}
\vspace{-2ex}
\end{figure}

Our layer-wise abstraction algorithm is shown in~\pref{alg:quotient}. For each
layer in the DNN, we will call~\pref{alg:ahat} to abstract the set of
\emph{mergings} of the layer's weight matrix. This abstract element becomes the
abstract weight $\AbsElemi{i}$ for the corresponding layer in the constructed
ANN.

The functions $\PCMs$ and $\ScaleCols$ are defined more precisely below.

\begin{definition}
    Let $P$ be some partition, i.e., non-empty subset, of $\{ 1, 2, \ldots, n
    \}$. Then a vector $\vec{c} \in \mathbb{R}^n$ is a \emph{partition
    combination vector} (PCV) if (i) each component $c_i$ is non-negative, (ii)
    the components of $c_i$ sum to one, and~(iii) $c_i = 0$ whenever $i \not\in
    P$.
\end{definition}

\newcommand*{\vertbar}{\rule[-1ex]{0.5pt}{2.5ex}}
\begin{definition}
    Given a partitioning $\ping$ of $\{ 1, 2, \ldots, n \}$ with $\abs{\ping} =
    k$, a \emph{partitioning combination matrix} (PCM) is a matrix
    $
        {\scriptsize C =
        \begin{bmatrix}
            \vertbar & \vertbar & & \vertbar \\
            \vec{c_1} & \vec{c_2} & \cdots & \vec{c_k} \\
            \vertbar & \vertbar & & \vertbar
        \end{bmatrix}},
    $
    where each $\vec{c_i}$ is a PCV of partition $\ping_i$. We refer to the set
    of all such PCMs for a partitioning $\ping$ by $\PCMs(\ping)$.
\end{definition}

\begin{definition}
    A PCM is \emph{binary} if each entry is either 0~or~1. We refer to the set
    of all binary PCMs for a partitioning $\ping$  as
    $\BinPCMs(\ping)$.
\end{definition}

\begin{definition}
    For an $n\times m$ matrix $M$, PCM $C$ of partitioning $\ping^{in}$ of
    $\{ 1, 2, \ldots, m \}$, and PCM $D$ for partitioning $\ping^{out}$ of $\{
        1, 2, \ldots, n \}$, we call $D^T M C$ a \emph{merging of~$M$.}
\end{definition}

The $j$th column in $MC$ is a convex combination of the columns of $M$ that
belong to partition $\ping^{in}_j$, weighted by the $j$th column of $C$.
Similarly, the $i$th row in $D^T M$ is a convex combination of the rows in $M$
that belong to partition $\ping^{out}_i$. In total, the $i, j$th entry of
merged matrix $D^T M C$ is a convex combination of the entries of $M$ with
indices in $\ping^{out}_i \times \ping^{in}_j$. This observation will lead
to~\pref{thm:BinPCMs} in~\pref{sec:AlgorithmComputable}.

\begin{definition}
    Given a matrix $M$, the \emph{column-scaled matrix} formed by weights $w_1,
    w_2, \ldots, w_k$ is the matrix with entries given component-wise by
    \\
    $\ScaleCols(M, (w_1, \ldots, w_k))_{i,j} \coloneqq M_{i,j}w_{j}$.
\end{definition}

Intuitively, column-scaling is needed because what were originally $n$
dimensions contributing to an input have been collapsed into a single
representative dimension. This is demonstrated nicely for the specific case of
Interval Neural Network and ReLU activations by Figures~3~and~4 in~\prabhakarcite{}.

\begin{example}
    Given the matrix
    $M =
        {\scriptsize\begin{bmatrix}
            m_{1, 1} & m_{1, 2} & m_{1, 3} \\
            m_{2, 1} & m_{2, 2} & m_{2, 3} \\
            m_{3, 1} & m_{3, 2} & m_{3, 3} \\
            m_{4, 1} & m_{4, 2} & m_{4, 3} \\
        \end{bmatrix}}$,
    partitioning $\lping{0} = \{ \{1,3\}, \{2\}\}$ of the input dimensions and
    $\lping{1} = \{\{2,4\},\{1,3\}\}$ of the output dimensions, we can define a
    PCM for $\lping{0}$ as
    ${\scriptsize
        C \coloneqq \begin{bmatrix}
            0.25 & 0 \\
            0 & 1 \\
            0.75 & 0 \\
        \end{bmatrix}}$
    and a PCM for $\lping{1}$ as:
    ${\scriptsize
        D \coloneqq \begin{bmatrix}
            0 & 0.99 \\
            0.4 & 0 \\
            0 & 0.01 \\
            0.6 & 0
        \end{bmatrix}}$.
    We can then compute the \emph{column--merged matrix}
    ${\scriptsize
        M C
        =
        \begin{bmatrix}
            0.25m_{1, 1} + 0.75m_{1, 3} & m_{1, 2} \\
            0.25m_{2, 1} + 0.75m_{2, 3} & m_{2, 2} \\
            0.25m_{3, 1} + 0.75m_{3, 3} & m_{3, 2} \\
            0.25m_{4, 1} + 0.75m_{4, 3} & m_{4, 2} \\
        \end{bmatrix}}$,
    and furthermore the \emph{column-row--merged matrix}
    \\
    ${\scriptsize
        D^T M C
        =
        \begin{bmatrix}
            0.4(0.25m_{2, 1} + 0.75m_{2, 3})
            + 0.6(0.25m_{4, 1} + 0.75m_{4, 3}) &
            0.4m_{2, 2} + 0.6m_{4, 2}
            \\
            0.99(0.25m_{1, 1} + 0.75m_{1, 3})
            + 0.01(0.25m_{3, 1} + 0.75m_{3, 3}) &
            0.99m_{1, 2} + 0.01m_{3, 2} \\
        \end{bmatrix}}$.

    Finally, we can column-scale this matrix like so:
    \[
        \begin{aligned}
        \scriptsize
            &\ScaleCols(D^T M C, (2, 2)) \\
        =
            &\begin{bmatrix}
            0.8(0.25m_{2, 1} + 0.75m_{2, 3})
            + 1.2(0.25m_{4, 1} + 0.75m_{4, 3}) &
            0.8m_{2, 2} + 1.2m_{4, 2}
            \\
            1.98(0.25m_{1, 1} + 0.75m_{1, 3})
            + 0.02(0.25m_{3, 1} + 0.75m_{3, 3}) &
            1.98m_{1, 2} + 0.02m_{3, 2} \\
        \end{bmatrix}.
        \end{aligned}
    \]
\end{example}

\subsection{Computability}
\label{sec:AlgorithmComputable}
In general, there are an infinite number of mergings. Hence, to actually
compute $\haalpha$ (\pref{alg:ahat}) we need some non-trivial way to compute the
abstraction of the infinite set of mergings. If the abstract domain $\AbsDomaini{i}$ is \emph{convex}, it can
be shown that one only needs to iterate over the binary PCMs, of which there
are finitely many, producing a computationally feasible algorithm.

\begin{definition}
    \label{def:BetaConvex}
    A weight set abstract domain $\AbsDomain$ is \emph{convex} if, for any set
    $S$ of concrete values, $\cgamma_{\AbsDomain}(\aalpha_{\AbsDomain}(S))$ is
    convex.
\end{definition}
Many commonly-used abstractions --- including
intervals~\cite{DBLP:conf/popl/CousotC77}, octagons~\cite{mine2006octagon}, and
polyhedra~\cite{DBLP:conf/popl/CousotH78} --- are convex.

\begin{restatable}{theorem}{ThmBinPCMs}
    \label{thm:BinPCMs}
    If $\AbsDomain$ is convex, then
    $\haalpha(M, \ping^{in}, \ping^{out}, \AbsDomain) =
    \haalpha_{bin}(M, \ping^{in}, \ping^{out}, \AbsDomain)$.
\end{restatable}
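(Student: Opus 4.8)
The plan is to show that the two algorithms abstract sets with the same image under $\cgamma_{\AbsDomain}\circ\aalpha_{\AbsDomain}$, so that applying $\aalpha_{\AbsDomain}$ produces the same abstract element. Write $S$ for the set of scaled mergings $\ScaleCols(D^T M C, w)$ ranging over all $C \in \PCMs(\ping^{in})$ and $D \in \PCMs(\ping^{out})$ (the set built by~\pref{alg:ahat}), and $S_{bin}$ for the analogous set ranging only over binary PCMs (built by~\pref{alg:ahatbin}). Since $\BinPCMs(\ping) \subseteq \PCMs(\ping)$, we have $S_{bin} \subseteq S$, and monotonicity of the abstraction map $\aalpha_{\AbsDomain}$ immediately gives $\aalpha_{\AbsDomain}(S_{bin}) \sqsubseteq \aalpha_{\AbsDomain}(S)$. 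The entire content of the theorem is therefore the reverse inequality $\aalpha_{\AbsDomain}(S) \sqsubseteq \aalpha_{\AbsDomain}(S_{bin})$.

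The key step is a geometric lemma: every element of $S$ lies in the convex hull of $S_{bin}$. First I would observe that $\PCMs(\ping^{in})$ is exactly the product of simplices $\prod_j \Delta_{\ping^{in}_j}$, one simplex per block, where the vertices of $\Delta_{\ping^{in}_j}$ are the standard basis vectors $e_k$ with $k \in \ping^{in}_j$. Hence $\PCMs(\ping^{in})$ is a polytope whose vertex set is precisely $\BinPCMs(\ping^{in})$ (a vertex of a product of polytopes is a tuple of vertices), and likewise for $\PCMs(\ping^{out})$. Next, the map $F(C, D) \coloneqq \ScaleCols(D^T M C, w)$ is bilinear, since $D^T M C$ is linear in $C$ for fixed $D$ and linear in $D$ for fixed $C$, and $\ScaleCols(\cdot, w)$ is linear. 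Writing an arbitrary $C = \sum_p \alpha_p C_p$ and $D = \sum_q \beta_q D_q$ as convex combinations of binary PCMs and expanding by bilinearity yields $F(C, D) = \sum_{p,q} \alpha_p \beta_q F(C_p, D_q)$, where the coefficients $\alpha_p\beta_q$ are nonnegative and sum to $(\sum_p \alpha_p)(\sum_q \beta_q) = 1$. Thus $F(C, D)$ is a convex combination of binary mergings $F(C_p, D_q) \in S_{bin}$, proving $S \subseteq \mathrm{conv}(S_{bin})$.

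To finish, I would invoke convexity of $\AbsDomain$. By~\pref{def:BetaConvex} the set $\cgamma_{\AbsDomain}(\aalpha_{\AbsDomain}(S_{bin}))$ is convex, and it contains $S_{bin}$ because $S_{bin} \subseteq \cgamma_{\AbsDomain}(\aalpha_{\AbsDomain}(S_{bin}))$ holds in any Galois connection; being convex it therefore contains $\mathrm{conv}(S_{bin})$, and hence contains $S$ by the lemma. The adjunction property $\aalpha_{\AbsDomain}(S) \sqsubseteq a \iff S \subseteq \cgamma_{\AbsDomain}(a)$, applied with $a = \aalpha_{\AbsDomain}(S_{bin})$, then yields $\aalpha_{\AbsDomain}(S) \sqsubseteq \aalpha_{\AbsDomain}(S_{bin})$, which combined with the free inequality above gives the claimed equality $\haalpha(M, \ping^{in}, \ping^{out}, \AbsDomain) = \haalpha_{bin}(M, \ping^{in}, \ping^{out}, \AbsDomain)$.

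I expect the main obstacle to be the geometric lemma of the second paragraph: specifically, pinning down that $\PCMs(\ping)$ really is a product of simplices whose exact vertex set is $\BinPCMs(\ping)$, and verifying that composing the merging $D^T M C$ with $\ScaleCols(\cdot, w)$ preserves bilinearity. Once bilinearity is established the convex-combination expansion is immediate, and the remaining steps are routine Galois-connection and convexity bookkeeping.
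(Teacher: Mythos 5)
Your proposal is correct and follows essentially the same route as the paper's proof: the paper's key lemma (every PCM is a convex combination of binary PCMs, proved column-wise) is exactly your product-of-simplices vertex claim, and the paper likewise expands $D^TMC = \sum_{j,k}\alpha_j\beta_k F_k^T M E_j$ by bilinearity and concludes via convexity of $\cgamma_{\AbsDomain}\circ\aalpha_{\AbsDomain}$. If anything, you are more explicit than the paper on two points it glosses over --- the linearity of $\ScaleCols(\cdot, w)$ (the paper just says the scaling is uniform and may be ignored) and the Galois-connection bookkeeping reducing the equality to $S \subseteq \cgamma_{\AbsDomain}(\aalpha_{\AbsDomain}(S_{bin}))$.
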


\appendixproof{app:BinPCMs}

\begin{remark}
    Consider PCMs $C$ and $D$ corresponding to merged matrix $D^T \cWi{i} C$.
    We may think of $C$ and $D$ as vectors in the vector space of matrices.
    Then their outer product $D \otimes C$ forms a convex coefficient matrix of
    the binary mergings $R$ of $\cWi{i}$, such that $(D\otimes C)R =
    D^T\cWi{i}C$.
    From this intuition, it follows that the converse to~\pref{thm:BinPCMs}
    \emph{does not} hold, as every matrix $E$ cannot be decomposed into vectors
    $D\otimes C$ as described (i.e., not every matrix has rank 1).  Hence, the
    convexity condition may be slightly weakened.  However, we are not
    presently aware of any abstract domains that satisfy such a condition but
    not convexity.
\end{remark}

\begin{example}
    Let
    ${\scriptsize
        \cWi{i} =
        \begin{bmatrix}
            1 & -2 & 3 \\
            4 & -5 & 6 \\
            7 & -8 & 9 \\
        \end{bmatrix}}
    $
    and consider $\lping{i-1} = \{\{1, 2\}, \{3\}\}$ and $\lping{i} = \{\{1,
    3\}, \{2\}\}$. Then we have the binary PCMs
    ${\scriptsize
        \BinPCMs(\lping{i-1})
        = \left\{
            \begin{bmatrix} 1 & 0 \\ 0 & 0 \\ 0 & 1 \\ \end{bmatrix},
            \begin{bmatrix} 0 & 0 \\ 1 & 0 \\ 0 & 1 \\ \end{bmatrix}
          \right\}
    }$
    \\
    and
    ${\scriptsize
        \BinPCMs(\lping{i})
        = \left\{
            \begin{bmatrix} 1 & 0 \\ 0 & 1 \\ 0 & 0 \\ \end{bmatrix},
            \begin{bmatrix} 0 & 0 \\ 0 & 1 \\ 1 & 0 \\ \end{bmatrix}
          \right\}
    }.$ These correspond to the column-scaled binary mergings
    ${\scriptsize
        \left\{
            \begin{bmatrix}
                2 & 3 \\
                8 & 6 \\
            \end{bmatrix},
            \begin{bmatrix}
                -4 & 3 \\
                -10 & 6 \\
            \end{bmatrix},
            \begin{bmatrix}
                14 & 9 \\
                8 & 6 \\
            \end{bmatrix},
            \begin{bmatrix}
                -16 & 9 \\
                -10 & 6 \\
            \end{bmatrix}
        \right\}}
    $.

    We can take any PCMs such as
    ${\scriptsize C = \begin{bmatrix} 0.75 & 0 \\ 0.25 & 0 \\ 0 & 1 \\ \end{bmatrix}}$
    for $\lping{i-1}$ as well as
    ${\scriptsize D = \begin{bmatrix} 0.5 & 0 \\ 0 & 1 \\ 0.5 & 0 \\ \end{bmatrix}}$
    for $\lping{i}$, resulting in the scaled merging
    ${\scriptsize
    \ScaleCols(D^T \cWi{i} C, (2, 1)) = \begin{bmatrix} 3.5 & 6 \\ 3.5 & 6 \end{bmatrix}}$.
    According to~\pref{thm:BinPCMs}, we can write this as a convex combination
    of the four column-scaled binary merged matrices. In particular, we find
    the combination
    \[
        \scriptsize
        \begin{aligned}
            \begin{bmatrix}
                3.5 & 6 \\
                3.5 & 6 \\
            \end{bmatrix}
            =
            &(1.5/2)(1)(0.5)(1)
            \begin{bmatrix}
                2 & 3 \\
                8 & 6 \\
            \end{bmatrix}
            +
            (0.5/2)(1)(0.5)(1)
            \begin{bmatrix}
                -4 & 3 \\
                -10 & 6 \\
            \end{bmatrix} \\
            &+
            (1.5/2)(1)(0.5)(1)
            \begin{bmatrix}
                14 & 9 \\
                8 & 6 \\
            \end{bmatrix}
            +
            (0.5/2)(1)(0.5)(1)
            \begin{bmatrix}
                -16 & 9 \\
                -10 & 6 \\
            \end{bmatrix}.
        \end{aligned}
    \]

    We can confirm that this is a convex combination, as
    \[
        \scriptsize
        (1.5/2)(1)(0.5)(1)
        +
        (0.5/2)(1)(0.5)(1)
        +
        (1.5/2)(1)(0.5)(1)
        +
        (0.5/2)(1)(0.5)(1)
        = 1.
    \]

    Because we can find such a convex combination for any such non-binary
    merging in terms of the binary ones, and because the abstract domain is
    assumed to be convex, including only the binary mergings will ensure that
    \emph{all} mergings are represented by the abstract element $\AbsElemi{i}$.
\end{example}

\subsection{Walkthrough Example}
\begin{example}
    \label{exa:AlgWalkthrough}
    Consider again the DNN from~\pref{exa:Partitioning} corresponding to\\
    $f(x_1) =
    {\scriptsize
    \begin{bmatrix}
        1 & 1 \\
        1 & 0 \\
        0 & 1
    \end{bmatrix}}
    \csigma\left(
        {\scriptsize\begin{bmatrix}
            1 \\
            -1
        \end{bmatrix}
        \begin{bmatrix}
            x_1
        \end{bmatrix}}
    \right)$,
    the partitioning
    $\lping{0} = \{\{1\}\}$,
    $\lping{1} = \{ \{1, 2\}\}$,
    $\lping{2} = \{\{1\}, \{2\}, \{3\}\}$,
    which collapses the two hidden dimensions, and assume the abstract domains
    $\AbsDomaini{i}$ are all convex.

    For the input layer, we have $w = (1)$, because the only partition in
    $\lping{0}$ has size $1$.  Similarly, the only binary PCM for $\lping{0}$
    is $C = \begin{bmatrix} 1 \end{bmatrix}$.  However, there are two binary
    PCMs for $\lping{1}$, namely
    $D = {\scriptsize\begin{bmatrix} 1 \\ 0 \end{bmatrix}}$ or
    $D = {\scriptsize\begin{bmatrix} 0 \\ 1 \end{bmatrix}}$. These correspond to the binary
    merged matrices $\begin{bmatrix} 1 \end{bmatrix}$ and
    $\begin{bmatrix} -1 \end{bmatrix}$. Hence, we get
    $\AbsElemi{1} = \aalpha_{\AbsDomaini{1}}(
    \{\begin{bmatrix} 1 \end{bmatrix}, \begin{bmatrix} -1 \end{bmatrix}\})$,
    completing the first layer.

    For the output layer, we have $w = (2)$, because the only partition in
    $\lping{1}$ contains \emph{two} nodes.  Hence, the column scaling will need
    to play a role: because we have merged two dimensions in the domain, we
    should interpret any value from that dimension as being from \emph{both} of
    the dimensions that were merged. We have two binary mergings, namely
    ${\scriptsize\begin{bmatrix} 1 \\ 1 \\ 0 \end{bmatrix}}$
    and
    ${\scriptsize\begin{bmatrix} 1 \\ 0 \\ 1 \end{bmatrix}}$,
    which after rescaling gives us
    $\AbsElemi{2} = \aalpha_{\AbsDomaini{2}}\left(
    \left\{
    {\scriptsize\begin{bmatrix} 2 \\ 2 \\ 0 \end{bmatrix},
     \begin{bmatrix} 2 \\ 0 \\ 2 \end{bmatrix}
    }\right\}
    \right)$.

    In total then, the returned ANN can be written $(\AbsElemi{1}, \csigma)$,
    $(\AbsElemi{2}, \concr{x \mapsto x})$ or in a more functional notation as
    $g(x) = \AbsElemi{2}\csigma(\AbsElemi{1}x)$, where in either case
    $\AbsElemi{1} = \aalpha_{\AbsDomaini{1}}(
    \{\begin{bmatrix} 1 \end{bmatrix}, \begin{bmatrix} -1 \end{bmatrix}\})$,
    and
    $\AbsElemi{2} = \aalpha_{\AbsDomaini{2}}\left(
    \left\{
    {\scriptsize\begin{bmatrix} 2 \\ 2 \\ 0 \end{bmatrix},
     \begin{bmatrix} 2 \\ 0 \\ 2 \end{bmatrix}
    }\right\}
    \right)$.
\end{example}

Note in particular that, while the operation of the algorithm was agnostic to
the exact abstract domains $\AbsCollection$ and activation functions $\cSigma$
used, the semantics of the resulting ANN depend \emph{entirely} on these.
Hence, correctness of the algorithm will depend on the abstract domain and
activation functions satisfying certain conditions. We will discuss this
further in~\pref{sec:Examples}.

\section{Layer-Wise Abstraction: Instantiations and Examples}
\label{sec:Examples}

This section examines a number of examples. For some DNNs,~\pref{alg:quotient}
will produce a soundly over-approximating ANN. For others, the ANN will
provably \emph{not} over-approximate the given DNN.  We will generalize these
examples to necessary and sufficient conditions on the activation functions
$\concr{\Sigma}$ used in order for \\ $\ALW$ to soundly over-approximate
$\concr{N}$.

\subsection{Interval Hull Domain with ReLU Activation Functions}
\label{sec:ExampleINNReLU}
Consider again the DNN from~\pref{exa:AlgWalkthrough} given by
$f(x_1) =
{\scriptsize\begin{bmatrix}
    1 & 1 \\
    1 & 0 \\
    0 & 1
\end{bmatrix}}
\mathrm{ReLU}\left(
    {\scriptsize\begin{bmatrix}
        1 \\
        -1
    \end{bmatrix}
    \begin{bmatrix}
        x_1
    \end{bmatrix}}
\right)$ and partitioning which merges the two intermediate dimensions. Using
the interval hull domain in~\pref{exa:AlgWalkthrough} gives the corresponding
INN:
$g(x_1) =
{\scriptsize\begin{bmatrix}
    [2, 2] \\
    [0, 2] \\
    [0, 2]
\end{bmatrix}}
\mathrm{ReLU}\left(
    {\scriptsize\begin{bmatrix}
        [-1, 1]
    \end{bmatrix}
    \begin{bmatrix}
        x_1
    \end{bmatrix}}
\right)$.

In fact, because the ReLU activation function and interval domain was used, it
follows from the results of~\prabhakarcite{} that $g$ in fact over-approximates $f$.
To see this, consider two cases. If $x_1 > 0$, then the second component in the
hidden dimension of $f$ will always become $0$ under the activation function.
Hence,
$f(x_1) =
{\scriptsize\begin{bmatrix}
    1 \\
    1 \\
    0
\end{bmatrix}}
\mathrm{ReLU}\left(
    {\scriptsize\begin{bmatrix}
        1
    \end{bmatrix}
    \begin{bmatrix}
        x_1
    \end{bmatrix}}
\right)
=
{\scriptsize
\begin{bmatrix}
    2 \\
    2 \\
    0
\end{bmatrix}}
\mathrm{ReLU}\left(
    {\scriptsize\begin{bmatrix}
        0.5
    \end{bmatrix}
    \begin{bmatrix}
        x_1
    \end{bmatrix}}
\right)$,
which is a valid instantiation of the weights in $g$. Otherwise, if $x_1 \leq
0$, we find
$f(x_1) =
{\scriptsize\begin{bmatrix}
    2 \\
    0 \\
    2
\end{bmatrix}}
\mathrm{ReLU}\left(
    {\scriptsize\begin{bmatrix}
        -0.5
    \end{bmatrix}
    \begin{bmatrix}
        x_1
    \end{bmatrix}}
\right)$,
which is again a valid instantiation. Hence in all cases, the true output
$f(x_1)$ can be made by some valid instantiation of the weights in $g$.
Therefore, $f(x_1) \in g(x_1)$ for all $x_1$ and so $g$ over-approximates $f$.

\subsubsection{Sufficiency Condition}
\label{sec:ExampleSoundness}
The soundness of this particular instantiation can be generalized to a
sufficiency theorem,~\pref{thm:Sound}, for soundness of the layer-wise
abstraction algorithm. Its statement relies on the activation function
satisfying the \emph{weakened intermediate value property,} which is defined
below:
\begin{definition}
    \label{def:WIVP}
    A function $f : \mathbb{R} \to \mathbb{R}$ satisfies the \emph{Weakened
    Intermediate Value Property (WIVP)} if, for every $a_1 \leq a_2 \leq \cdots
    \leq a_n \in \mathbb{R}$, there exists some $b \in [a_1, a_n]$ such that
    $f(b) = \frac{\sum_i f(a_i)}{n}$.
\end{definition}
Every continuous function satisfies the IVP and hence the WIVP.  Almost all
commonly-used activation functions, except for $\mathrm{thresh}$, are
continuous and, therefore, satisfy the WIVP. However, the WIVP is not equivalent
to the IVP, as the below proof shows by constructing a function $f$ such that
$f((a, b)) = \mathbb{Q}$ for any non-empty open interval $(a, b)$.
\appendixproof{app:WIVP}

We now state the soundness theorem below, which is proved
in~\pref{sec:ProveSound}.
\begin{restatable}{theorem}{ThmSound}
    \label{thm:Sound}
    Let $\AbsCollection$ be a set of weight set abstract domains and
    $\concr{\Sigma}$ a set of activation functions. Suppose
    (i) each $\csigma \in \concr{\Sigma}$ has entirely non-negative outputs,
    and
    (ii) each $\csigma \in \concr{\Sigma}$ satisfies the Weakened Intermediate
    Value Property (\pref{def:WIVP}).
    Then $\abstr{T} = \ALW$ (\pref{alg:quotient}) soundly over-approximates the
    DNN $\concr{N}$.
\end{restatable}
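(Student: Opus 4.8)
The plan is to prove the equivalent characterization stated in the remark following the definition of over-approximation: for every fixed input $\vec{v}$ I will exhibit a single instantiation $\concr{T_{\vec{v}}} \in \cgamma(\abstr{T})$ with $\concr{T_{\vec{v}}}(\vec{v}) = \concr{N}(\vec{v})$. The point is that the instantiation may depend on $\vec{v}$, so I am free to use the concrete trace of $\concr{N}$ on $\vec{v}$ when selecting weights. Write $\vec{v}^{(i)}$ for the activations of $\concr{N}$ and $\vec{z}^{(i)} = \cWi{i}\vec{v}^{(i-1)}$ for its pre-activations, and let $\vec{u}^{(i)}$ denote the activations of the instantiation I am constructing. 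I build the instantiation layer by layer, maintaining the invariant that $\vec{u}^{(i)}_a = \frac{1}{|\lping{i}_a|}\sum_{p \in \lping{i}_a}\vec{v}^{(i)}_p$, i.e.\ each merged activation is the mean of the original activations over its partition. Because $\lping{0}$ and $\lping{n}$ are the trivial all-singleton partitionings, this invariant forces agreement with $\concr{N}$ at both the input ($\vec{u}^{(0)} = \vec{v}$) and the output ($\vec{u}^{(n)} = \vec{v}^{(n)} = \concr{N}(\vec{v})$), which is exactly the goal.

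Each instantiated matrix must have the form $\cWip{i} = \ScaleCols(D^T \cWi{i} C, w)$ for $C \in \PCMs(\lping{i-1})$, $D \in \PCMs(\lping{i})$, and $w_j = |\lping{i-1}_j|$; every such merging belongs to the set $S$ abstracted by \pref{alg:ahat}, hence to $\cgamma_{\AbsDomaini{i}}(\AbsElemi{i})$ by extensivity of the Galois connection, so it is a legal instantiation weight. The crux is choosing $C$ and $D$. First I use the \emph{column} PCM $C$ to undo the information lost in merging inputs: for $q \in \lping{i-1}_j$ set $C_{q,j} \coloneqq \vec{v}^{(i-1)}_q / \sum_{k\in\lping{i-1}_j}\vec{v}^{(i-1)}_k$ (and any valid PCV when that denominator is $0$). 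Using the invariant in the form $w_j\,\vec{u}^{(i-1)}_j = \sum_{k\in\lping{i-1}_j}\vec{v}^{(i-1)}_k$, a short computation shows the instantiation's pre-activation $(\cWip{i}\vec{u}^{(i-1)})_a$ collapses to $\sum_{p\in\lping{i}_a} D_{p,a}\,\vec{z}^{(i)}_p$, a convex combination, selected by the \emph{row} PCM $D$, of the true pre-activations over the partition.

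This is where the two hypotheses enter. Condition~(i), non-negativity of activation outputs, is precisely what makes $C$ a valid PCM: for $i \geq 2$ the inputs $\vec{v}^{(i-1)} = \csigmai{i-1}(\cdots)$ are non-negative, so the proposed entries are non-negative and sum to one (the $i=1$ layer is trivial, since $\lping{0}$ is all singletons forces $C$ to be the identity and $w = (1,\ldots,1)$, regardless of the sign of $\vec{v}$). Condition~(ii), the WIVP (\pref{def:WIVP}), finishes the layer: applying it to the sorted values $\{\vec{z}^{(i)}_p : p\in\lping{i}_a\}$ yields a point $b_a$ in their range with $\csigmai{i}(b_a) = \frac{1}{|\lping{i}_a|}\sum_{p\in\lping{i}_a}\csigmai{i}(\vec{z}^{(i)}_p)$. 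Since $b_a$ lies between the minimum and maximum of the $\vec{z}^{(i)}_p$, I choose the PCV $D_{\cdot,a}$ supported on a minimizing and a maximizing index so that $\sum_{p\in\lping{i}_a} D_{p,a}\vec{z}^{(i)}_p = b_a$; then $\vec{u}^{(i)}_a = \csigmai{i}(b_a)$ equals the mean of $\vec{v}^{(i)}_p = \csigmai{i}(\vec{z}^{(i)}_p)$, re-establishing the invariant. Propagating through all $n$ layers and reading off the all-singleton $\lping{n}$ gives $\concr{T_{\vec{v}}}(\vec{v}) = \concr{N}(\vec{v})$, and since $\vec{v}$ was arbitrary, $\abstr{T}$ over-approximates $\concr{N}$.

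I expect the main obstacle to be the design of the column PCM $C$: recognizing that the correct way to invert the merging of inputs is an input-proportional convex combination, and that the non-negativity hypothesis is exactly the condition making this combination a legitimate PCM. Handling the degenerate partition whose input values sum to zero (where, by non-negativity, all of them must vanish, so any PCV works) and the trivial base layer are the routine loose ends. Once $C$ is fixed, the role of $D$ together with the WIVP is comparatively mechanical, amounting to hitting a prescribed in-range value by a convex combination of the $\vec{z}^{(i)}_p$.
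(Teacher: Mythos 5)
Your proof is correct and follows essentially the same route as the paper's: your mean-of-partition invariant is the paper's mean representative $\meanrep{\cdot}{\ping}$, your input-proportional column PCM $C$ and min/max-supported row PCM $D$ are exactly the construction $\czeta$ of \pref{alg:zeta} (whose correctness is \pref{lem:ZetaCorrect}), and your WIVP step is \pref{lem:PreimageOfMean}. You are in fact slightly more careful than the paper on two degenerate points (partitions whose activation values sum to zero, and the first layer where inputs may be negative), but the decomposition and key ideas are the same.
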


\subsection{Interval Hull Domain with \emph{Leaky} ReLUs}
\label{sec:ExampleINNLeaky}
Something different happens if we slightly modify $f$ in~\pref{exa:AlgWalkthrough} to use an activation
function producing \emph{negative values} in the intermediate dimensions.  This
is quite common of activation functions like Leaky ReLU and $\tanh$, and was
not mentioned by~\prabhakarcite{}. For example, we will take the Leaky ReLU function
(\pref{def:Activations}) with $\mathtt{c} = 0.5$ and consider the DNN
$f(x_1) =
{\scriptsize\begin{bmatrix}
    1 & 1 \\
    1 & 0 \\
    0 & 1
\end{bmatrix}}
\mathrm{LReLU}\left(
    {\scriptsize\begin{bmatrix}
        1 \\
        -1
    \end{bmatrix}
    \begin{bmatrix}
        x_1
    \end{bmatrix}};
    0.5
\right)$.
Using the same partitioning gives us the INN
$g(x_1) =
{\scriptsize\begin{bmatrix}
    [2, 2] \\
    [0, 2] \\
    [0, 2]
\end{bmatrix}}
\mathrm{LReLU}\left(
    {\scriptsize\begin{bmatrix}
        [-1, 1]
    \end{bmatrix}
    \begin{bmatrix}
        x_1
    \end{bmatrix}};
    0.5
\right)$.

Surprisingly, this small change to the activation function in fact makes the
constructed ANN no longer over-approximate the original DNN. For example, note
that $f(1) = \begin{bmatrix} 0.5 & 1 & -0.5 \end{bmatrix}^T$
and consider $g(1)$. In~$g$, the output of the LReLU is one-dimensional, hence,
it will have either positive, negative, or zero sign. But no matter how the
weights in the final matrix are instantiated, every component of $g(1)$ will
have \emph{the same (or zero) sign}, and so $f(1) \not\in g(1)$, because $f(1)$
has mixed signs.

\subsubsection{Necessary Condition: Non-Negative Values}
\label{sec:NeedNonNegative}

We can generalize this counterexample to the following necessary
condition on soundness:
\begin{restatable}{theorem}{ThmNeedNonNegative}
    \label{thm:NeedNonNegative}
    Suppose some $\csigma \in \cSigma$ is an activation function with neither
    entirely non-negative nor entirely non-positive outputs, and every
    $\AbsDomain \in \AbsCollection$ is at least as precise as the interval hull
    abstraction.  Then there exists a neural network $\concr{N}$ that uses
    $\csigma$  and a partitioning $\nping$ such that $\abstr{T} = \ALW$ does
    not over-approximate $\concr{N}$.
\end{restatable}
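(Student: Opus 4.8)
The plan is to generalize the Leaky~ReLU counterexample from~\pref{sec:ExampleINNLeaky}. Since $\csigma$ has neither entirely non-negative nor entirely non-positive outputs, I may fix reals $p$ and $q$ with $\csigma(p) > 0$ and $\csigma(q) < 0$. The idea is to build a two-layer DNN on a single input dimension whose hidden layer gets merged down to one dimension, so that every instantiation must route both output components through a single activated scalar; since that scalar has a fixed sign, the two outputs cannot have opposite signs, whereas the true DNN output will.

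Concretely, I would let $\concr{N}$ have first layer $\left(\begin{bmatrix} p \\ q \end{bmatrix}, \csigma\right)$ and second layer $\left(\begin{bmatrix} 1 & 0 \\ 0 & 1 \end{bmatrix}, I\right)$ with $I(x) = x$, so that $\concr{N}(1) = \begin{bmatrix} \csigma(p) \\ \csigma(q) \end{bmatrix}$ has one strictly positive and one strictly negative component. I would pair this with the partitioning $\lping{0} = \{\{1\}\}$, $\lping{1} = \{\{1,2\}\}$, $\lping{2} = \{\{1\}, \{2\}\}$, which collapses the two hidden nodes into one.

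Next I would trace~\pref{alg:quotient} exactly as in~\pref{exa:AlgWalkthrough} to pin down the abstract output weights. The binary mergings of the identity output matrix under this partitioning are $\begin{bmatrix} 1 \\ 0 \end{bmatrix}$ and $\begin{bmatrix} 0 \\ 1 \end{bmatrix}$; after the column scaling by $w = (2)$ forced by the size-two merged input partition, we obtain $\AbsElemi{2} = \aalpha_{\AbsDomaini{2}}\left(\left\{\begin{bmatrix} 2 \\ 0 \end{bmatrix}, \begin{bmatrix} 0 \\ 2 \end{bmatrix}\right\}\right)$. Because $\AbsDomaini{2}$ is at least as precise as the interval hull, $\cgamma_{\AbsDomaini{2}}(\AbsElemi{2})$ is contained in the interval hull $\{(a, b)^T : a, b \in [0, 2]\}$, so every instantiation's output matrix has entirely non-negative entries.

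To conclude, observe that any instantiation of $\abstr{T}$, evaluated at $x_1 = 1$, produces $\csigma(w_1)\begin{bmatrix} a \\ b \end{bmatrix}$ for some scalar first-layer weight $w_1$ and some $a, b \geq 0$; hence both output components share the sign of the single scalar $\csigma(w_1)$ (or are zero) and can never be strictly opposite in sign. Since $\concr{N}(1)$ is strictly mixed in sign, $\concr{N}(1) \notin \abstr{T}(1)$, so $\abstr{T}$ does not over-approximate $\concr{N}$. I expect the main obstacle to be the non-negativity argument of the third step: one must argue that merging the hidden dimension cannot yield any sign-mixing output matrix under \emph{any} domain at least as precise as intervals, and this is exactly where both the non-negative structure of the identity mergings and the precision hypothesis are essential.
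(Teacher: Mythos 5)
Your proposal is correct and follows essentially the same route as the paper's proof: the same two-layer network with hidden weights $(p,q)^T$ and identity output matrix, the same partitioning collapsing the two hidden dimensions, and the same sign-mixing contradiction, with the precision hypothesis used to confine instantiations inside the interval hull. Your version is, if anything, slightly more explicit than the paper's in tracing the algorithm to compute $\AbsElemi{2}$ and in arguing that every instantiated output matrix is entrywise non-negative.
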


\appendixproof{app:NeedNonNegative}

\subsubsection{Handling Negative Values}
\label{sec:ShiftNetwork}

Thankfully, there is a workaround to support sometimes-negative activation
functions. The constructive theorem below implies that a given DNN can be
modified into a \emph{shifted} version of itself such that the input-output
behavior on any arbitrary bounded region is retained, but the intermediate
activations are all non-negative.
\begin{restatable}{theorem}{ThmShiftNetwork}
    \label{thm:ShiftNetwork}
    Let $\concr{N}$ be a DNN and suppose that, on some input region $R$, the
    output of the activation functions are lower-bounded by a constant $C$.
    Then, there exists another DNN $\concr{N'}$, with at most one extra
    dimension per layer, which satisfies (i)~$\concr{N'}(x) = \concr{N}(x)$ for
    any $x \in R$, (ii)~$\concr{N'}$ has all non-negative activation functions,
    and (iii)~the new activation functions $\csigmap$ are of the form
    $\csigmap(x) = \max(\csigma(x) + \abs{C}, 0)$.
\end{restatable}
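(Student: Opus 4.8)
The plan is to build $\concr{N'}$ from $\concr{N}$ in two moves: replace each activation $\csigmai{i}$ with the non-negative function $\csigmaip{i}(x) = \max(\csigmai{i}(x) + \abs{C}, 0)$, and add a single ``bias-carrier'' dimension to each layer that holds a known positive constant, used to cancel the additive shift that the new activations introduce. On $R$ we are given $\csigmai{i}(z) \geq C$ for every pre-activation $z$ that actually occurs, and since $C \geq -\abs{C}$ always holds, we get $\csigmai{i}(z) + \abs{C} \geq 0$; hence on $R$ the clamp in $\max(\cdot,0)$ is inactive and $\csigmaip{i}(z) = \csigmai{i}(z) + \abs{C}$. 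Thus each hidden vector produced by $\concr{N'}$ equals the corresponding vector of $\concr{N}$ shifted up by the constant $\abs{C}$ in every original coordinate, which is exactly what makes all intermediate activations non-negative and establishes property~(iii).

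First I would track how this $+\abs{C}$ shift propagates. If layer $i-1$ outputs the true activation vector plus $\abs{C}\mathbf{1}$, then feeding it through the original weights $\cWi{i}$ produces the true pre-activation plus the constant vector $\abs{C}\,\cWi{i}\mathbf{1}$ (the row sums of $\cWi{i}$ scaled by $\abs{C}$). The key step is to cancel this constant through the carrier: if the carrier dimension of layer $i-1$ holds a known constant $\kappa_{i-1} > 0$, then a single weight equal to $-\abs{C}\,(\cWi{i}\mathbf{1})_j / \kappa_{i-1}$ into original output $j$ exactly subtracts the spurious term, restoring the pre-activation of layer $i$ to the true $\cWi{i}\vec{v}^{(i-1)}$. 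An induction on $i$ then shows that, for every $x \in R$, each layer of $\concr{N'}$ receives the correct pre-activation and outputs the true activation shifted by $\abs{C}$, with the carrier reproduced at each step; I would make the carrier's own pre-activation depend only on the previous carrier (zeroing its other weights) so that $\kappa_i = \csigmaip{i}(a_i\kappa_{i-1})$ is genuinely constant, choosing the free weight $a_i$ so the value is positive. For the carrier to be usable we need each $\kappa_i > 0$, i.e.\ that $\csigmaip{i}$ be positive somewhere, which holds for every non-degenerate activation (for ReLU, Leaky ReLU, $\tanh$, and Sigmoid even $\csigmaip{i}(0) > 0$ whenever $C \neq 0$; the case $C = 0$ is vacuous since then no shift is needed). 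The first layer, having no predecessor carrier, is seeded from the raw input by giving its carrier pre-activation $0$, so $\kappa_1 = \csigmaip{1}(0)$.

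The main obstacle lives at the output end. Carrying the shift all the way through would leave $\concr{N'}(x) = \concr{N}(x) + \abs{C}\mathbf{1}$, so the final layer must cancel the incoming shift \emph{without} leaving a net offset, while still being written in the form $\max(\csigma(x) + \abs{C}, 0)$ demanded by~(iii). I would handle this by choosing the last layer's bias (again routed through the carrier) so that its pre-activation is driven to exactly the value at which the shifted final activation reproduces $\concr{N}(x)$ coordinate-wise, using the hypothesis that the final activation outputs are themselves lower-bounded by $C$ on $R$ so that the clamp is provably inactive there. Verifying that this simultaneous bias-cancellation is consistent across all output coordinates, that the restored pre-activation composes correctly with the last activation to yield $\concr{N}(x)$ rather than a shifted or clamped value, and that the $\max(\cdot,0)$ is inactive at \emph{every} layer for \emph{all} $x \in R$ (not merely at the particular pre-activations of one input), is the step I expect to require the most care and the most attention to sign bookkeeping.
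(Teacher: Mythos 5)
Your treatment of the hidden layers is essentially the paper's own construction, worked out in more detail than the paper gives: the paper likewise replaces each hidden activation by $\max(\csigmai{i}(x)+\abs{C},0)$ and cancels the resulting shift inside the next layer's weights, writing $\cWip{i}\vec{v} \coloneqq \cWi{i}(\vec{v}-\abs{C})$ and remarking only that the constant term can be realized ``by a standard transformation'' using one extra dimension per layer; your carrier dimension with constant value $\kappa_i$ and cancellation weights $-\abs{C}(\cWi{i}\mathbf{1})_j/\kappa_{i-1}$ is exactly that transformation, including the requirement (elided by the paper) that $\kappa_i>0$ be achievable.

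The genuine gap is at the output layer, and it is not a matter of care or sign bookkeeping: the route you propose cannot be completed. The paper resolves the output end by \emph{not} replacing the last activation: it sets $\csigmaip{n}\coloneqq\csigmai{n}$ and $\cWip{n}\vec{v}\coloneqq\cWi{n}(\vec{v}-\abs{C})$, so the last layer consumes the shifted hidden vector and outputs $\concr{N}(x)$ exactly; clauses (ii) and (iii) are to be read as applying to the newly introduced hidden activations, which is all \pref{thm:Sound} requires, since the output activation never feeds a subsequent layer. You instead replace the last activation by $\max(\csigmai{n}(x)+\abs{C},0)$ and hope to cancel the shift with a carrier bias. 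Two things break. First, your final activation is non-negative, so every component of $\concr{N'}(x)$ is $\geq 0$, while nothing in the hypotheses prevents $\concr{N}$ from having negative outputs on $R$: the paper's own running example (LReLU hidden layer, identity output layer) has $f(-1)=(0.5,-0.5,1)^T$, and for any such network clause (i) is impossible with your final activation, no matter how the weights and biases are chosen. Second, even when all true outputs happen to be non-negative, a carrier bias is a \emph{constant} shift $\delta_j$ of the $j$th pre-activation, so you would need $\max(\csigmai{n}(z_j(x)+\delta_j)+\abs{C},0)=\csigmai{n}(z_j(x))$ to hold simultaneously for all $x\in R$; for non-affine $\csigmai{n}$ (e.g.\ $\tanh$, or ReLU when some pre-activation on $R$ lies below $\abs{C}$) no constant $\delta_j$ satisfies this functional equation, and the hypothesis you invoke --- that activation outputs are lower-bounded by $C$ on $R$ --- says nothing about the clamp at the shifted points $z_j(x)+\delta_j$, which are generally not pre-activations that $\concr{N}$ ever produces on $R$. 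The fix is the paper's: keep $\csigmai{n}$ unchanged at the output and push the entire cancellation into $\cWip{n}$.
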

Notably, the proof of this theorem is \emph{constructive} with a
straightforward construction. The one requirement is that a lower-bound $C$ be
provided for the output of the nodes in the network. This lower-bound need not
be tight, and can be computed quickly using the same procedure discussed for
upper bounds immediately following Equation~1~in~\prabhakarcite{}.  For $\tanh$ in
particular, its output is always lower-bounded by $-1$ so we can immediately
take $C = -1$ for a network using only $\tanh$ activations.

\appendixproof{app:ShiftNetwork}

\subsection{Interval Hull Abstraction with Non-Continuous Functions}
\label{sec:ExampleINNThresh}
Another way that the constructed ANN may not over-approximate the DNN is if the
activation function does not satisfy the Weakened Intermediate Value
Property~(WIVP) (\pref{def:WIVP}). For example, consider the threshold
activation function (\pref{def:Activations}) with parameters $\mathtt{t} = 1$, $\mathtt{v} = 0$
and the same overall network, i.e.\
$f(x_1) =
{\scriptsize\begin{bmatrix}
    1 & 1 \\
    1 & 0 \\
    0 & 1
\end{bmatrix}}
\mathrm{thresh}\left(
    {\scriptsize\begin{bmatrix}
        1 \\
        -1
    \end{bmatrix}
    \begin{bmatrix}
        x_1
    \end{bmatrix}};
    1, 0
\right)$
and the same partitioning. We get the INN
$g(x_1) =
{\scriptsize\begin{bmatrix}
    [2, 2] \\
    [0, 2] \\
    [0, 2]
\end{bmatrix}}
\mathrm{thresh}\left(
    {\scriptsize\begin{bmatrix}
        [-1, 1]
    \end{bmatrix}
    \begin{bmatrix}
        x_1
    \end{bmatrix}};
    1, 0
\right)$.
We have $f(1) = \begin{bmatrix} 1 & 1 & 0 \end{bmatrix}^T$, however, in $g(1)$,
no matter how we instantiate the $[-1, 1]$ weight, the output of the
$\mathrm{thresh}$ unit will either be $0$ or $1$. But then the output of the
first output component must be either $0$ or $2$, neither of which is $1$, and
so $g$ does \emph{not} over-approximate $f$.

\subsubsection{Necessary Condition: WIVP}
\label{sec:NeedWIVP}

We can generalize this example to the following necessary condition:
\begin{restatable}{theorem}{ThmNeedWIVP}
    \label{thm:NeedWIVP}
    Suppose some $\csigma \in \concr{\Sigma}$ is an activation function which
    does not satisfy the WIVP, and every $\AbsDomain \in \AbsCollection$ is at
    least as precise as the interval hull abstraction.  Then there exists a
    neural network $\concr{N}$ using only the identity and $\csigma$ activation
    functions and partitioning $\nping$ such that $\abstr{T} = \ALW$ does not
    over-approximate $\concr{N}$.
\end{restatable}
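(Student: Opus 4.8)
The plan is to generalize the $\mathrm{thresh}$ counterexample of \pref{sec:ExampleINNThresh}, using the inputs that witness the failure of the WIVP. First I would unfold \pref{def:WIVP}: since $\csigma$ does not satisfy the WIVP, there exist reals $a_1 \leq a_2 \leq \cdots \leq a_n$ such that, writing $\mu \coloneqq \frac{1}{n}\sum_{i} \csigma(a_i)$, no $b \in [a_1, a_n]$ satisfies $\csigma(b) = \mu$; equivalently $\mu \notin \csigma([a_1, a_n])$. These witnesses drive the entire construction.

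Next I would build the DNN $\concr{N}$ with a single input dimension, $n$ hidden dimensions, and a single output dimension: take $\cWi{1} = (a_1, \ldots, a_n)^T$ with activation $\csigma$, and $\cWi{2} = (\tfrac{1}{n}, \ldots, \tfrac{1}{n})$ with the identity activation. I would choose the partitioning $\nping$ that merges all $n$ hidden dimensions, i.e.\ $\lping{0} = \{\{1\}\}$, $\lping{1} = \{\{1, 2, \ldots, n\}\}$, and $\lping{2} = \{\{1\}\}$. On the input $x_1 = 1$ the DNN outputs $\concr{N}(1) = \frac{1}{n}\sum_i \csigma(a_i) = \mu$.

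Then I would trace \pref{alg:quotient} on this network. For the first layer the input partition is trivial, so the column-scaling weight is $1$ and the binary mergings recover exactly $\{a_1, \ldots, a_n\}$, giving $\AbsElemi{1} = \aalpha_{\AbsDomain}(\{a_1, \ldots, a_n\})$. For the output layer the merged hidden partition has size $n$, so the scaling weight is $n$; each binary merging selects a single entry $\tfrac{1}{n}$ of $\cWi{2}$ and rescales it to $1$, giving $\AbsElemi{2} = \aalpha_{\AbsDomain}(\{1\})$. Because every domain in $\AbsCollection$ is at least as precise as the interval hull, I obtain $\cgamma_{\AbsDomain}(\AbsElemi{1}) \subseteq [a_1, a_n]$ and $\cgamma_{\AbsDomain}(\AbsElemi{2}) \subseteq \{1\}$.

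Finally I would evaluate the ANN semantics at $x_1 = 1$. Every instantiation picks $w_1 \in \cgamma_{\AbsDomain}(\AbsElemi{1}) \subseteq [a_1, a_n]$ and $w_2 \in \cgamma_{\AbsDomain}(\AbsElemi{2}) = \{1\}$, producing output $w_2\,\csigma(w_1) = \csigma(w_1)$. Hence $\abstr{T}(1) \subseteq \csigma([a_1, a_n])$, which by the choice of witnesses excludes $\mu = \concr{N}(1)$, so $\concr{N}(1) \notin \abstr{T}(1)$ and over-approximation fails. The main obstacle I anticipate is the bookkeeping in the algorithm trace, specifically confirming that the column-scaling factor $n$ exactly cancels the $\tfrac{1}{n}$ output weights so that $\AbsElemi{2}$ pins the final weight to the single value $1$; this cancellation is precisely what removes all instantiation slack in the output layer and confines $\abstr{T}(1)$ to the image $\csigma([a_1, a_n])$ that the WIVP failure was chosen to miss.
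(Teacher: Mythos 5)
Your proof is correct and takes essentially the same approach as the paper's: the same WIVP witnesses $a_1 \leq \cdots \leq a_n$, the same single-hidden-layer network whose first layer has weights $(a_1,\ldots,a_n)^T$, the same partitioning collapsing all hidden dimensions, and the same final contradiction that any instantiation's output is $\csigma(b)$ (suitably scaled) for some $b \in [a_1,a_n]$ while the DNN outputs the unreachable mean. The only difference is cosmetic: you take output weights $1/n$ so that column-scaling pins the abstract output weight to $[1,1]$, whereas the paper takes output weights $1$ and gets $[n,n]$; the two constructions differ by an overall factor of $n$ and yield the identical contradiction.
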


\appendixproof{app:NeedWIVP}

While this is of some theoretical curiosity, in practice almost all
commonly-used activation functions do satisfy the WIVP. Nevertheless, if one
does wish to use such a function, one way to soundly over-approximate it with
an ANN is to replace the \emph{scalar} activation function with a
\emph{set-valued} one. The ANN semantics can be extended to allow picking any
output value from the activation function in addition to any weight from the
weight set.

For example, consider again the $\mathrm{thresh}(x; 1, 0)$ activation function.
It can be completed to a set-valued activation function which satisfies the
WIVP such as
${\scriptsize
\mathrm{thresh}'(x; 1, 0) \coloneqq
    \begin{cases}
        \{x\} &\text{if } x > 1 \\
        \{a\mid a \in [0, 1]\}  &\text{if } x = 1 \\
        \{0\} &\text{otherwise}
    \end{cases}
}$.
The idea is that we ``fill the gap'' in the graph. Whereas in the original
threshold function we had an issue because there was no $x \in [0, 1]$
which satisfied
$\mathrm{thresh}(x; 1, 0) = \frac{f(0) + f(1)}{2} = \frac{1}{2}$,
on the set-valued function we can take $x = 1 \in [0, 1]$ to find
$\frac{1}{2} \in \mathrm{thresh}'(1; 1, 0)$.

\subsection{Powerset Abstraction, ReLU, and $\haalpha_{bin}$}
\label{sec:ExamplePSNNReLU}
Recall that $\haalpha$~(\pref{alg:ahat}) requires abstracting the,
usually-infinite, set of \emph{all} merged matrices $D^T \cWi{i} C$.  However,
in~\pref{sec:AlgorithmComputable} we showed that for convex abstract domains it
suffices to only consider the finitely-many \emph{binary} mergings. The reader
may wonder if there are abstract domains for which it is \emph{not} sufficient
to consider only the binary PCMs. This section presents such an example.

Suppose we use the same ReLU DNN $f$ as in~\pref{sec:ExampleINNReLU}, for which
we noted before the corresponding INN over-approximates it.  However, suppose
instead of intervals we used the \emph{powerset} abstract domain, i.e.,
$\alpha(S) = S$ and $A \sqcup B = A \cup B$. If we (incorrectly) used
$\haalpha_{bin}$ instead of $\haalpha$, we would get the powerset ANN
$g(x_1) =
\left\{
    {\scriptsize\begin{bmatrix}
        2 \\
        2 \\
        0
    \end{bmatrix},
    \begin{bmatrix}
        2 \\
        0 \\
        2
    \end{bmatrix}}
\right\}
\mathrm{ReLU}\left(
    \{
        \begin{bmatrix} 1 \end{bmatrix},
        \begin{bmatrix} -1 \end{bmatrix}
    \}
    \begin{bmatrix}
        x_1
    \end{bmatrix}
\right)$.
Recall that $f(1) = \begin{bmatrix} 1 & 1 & 0 \end{bmatrix}^T$.  However, with
$g(1)$, the first output will always be either $0$ or $2$, so $g$ does
\emph{not} over-approximate $f$. The basic issue is that to get the
correct output, we need to instantiate the inner weight to $0.5$, which is in
the convex hull of the original weights, but is not either one of the original
weights itself.

Note that, in this particular example, it is possible to find an ANN that
over-approximates the DNN using only finite sets for the abstract weights.
However, this is only because ReLU is piecewise-linear, and the size of the
sets needed will grow exponentially with the number of dimensions. For other
activation functions, e.g., $\tanh$ infinite sets are required in general.

In general, non-convex abstract domains will need to use some other method of
computing an over-approximation of $\haalpha$. One general-purpose option is to
use techniques such as those developed for symbolic
abstraction~\cite{thakur_reps_CAV12} to iteratively compute an
over-approximation of the true $\AbsElemi{i}$ and use that instead.

\section{Proof of Sufficient Conditions}
\label{sec:ProveSound}

We now prove~\pref{thm:Sound}, which provides sufficient conditions on the
activation functions for which~\pref{alg:quotient} produces an ANN that soundly
over-approximates the given DNN.

The structure of the proof is illustrated in~\pref{fig:Soundness}. To show that
ANN $\abstr{T}$ over-approximates DNN $\concr{N}$, we must show that
$\concr{N}(\vec{v}) \in \abstr{T}(\vec{v})$ for every $\vec{v}$. This occurs,
by definition, only if there exists some \emph{instantiation}
$\concr{T_{\vec{v}}} \in \cgamma(\abstr{T})$ of $\abstr{T}$ for which
$\concr{N}(\vec{v}) = \concr{T_{\vec{v}}}(\vec{v})$. Recall that an
instantiation of an ANN is a DNN formed by replacing each abstract weight
$\AbsElemi{i}$ with a concrete weight matrix $\concr{H^{(i)}} \in
\cgamma(\AbsElemi{i})$. In particular, our proof will proceed layer-by-layer.
On an input $\vec{v} = \vec{v}^{(0)}$, the $i$th layer of DNN $\concr{N}$ maps
$\vec{v}^{(i-1)}$ to $\vec{v}^{(i)}$ until the output $\vec{v}^{(n)}$ is
computed.
We will prove that, for each abstract layer
$(\AbsElemi{i}, \csigmai{i}, \AbsDomaini{i})$,
there is a matrix
$\cHi{i} = \czeta(\AbsElemi{i}, \vec{v}^{(i-1)}) \in \cgamma(\AbsElemi{i})$
for which the instantiated layer
$(\cHi{i}, \csigmai{i})$, roughly speaking,
also maps $\vec{v}^{(i-1)}$ to $\vec{v}^{(i)}$.
However, by design the abstract layer will have fewer dimensions, hence the
higher-dimensional $\vec{v}^{(i-1)}$ and $\vec{v}^{(i)}$ may not belong to its
domain and range (respectively).
We resolve this by associating with each vector $\vec{v}^{(i)}$ in the
intermediate spaces of $\concr{N}$ a \emph{mean representative} vector
$\meanrep{\vec{v}^{(i)}}{\lping{i}}$ in the intermediate spaces of
$\concr{T_{\vec{v}}}$.
Then we can rigorously prove that the instantiated layer $(\cHi{i},
\csigmai{i})$ maps $\meanrep{\vec{v}^{(i-1)}}{\lping{i-1}}$ to
$\meanrep{\vec{v}^{(i)}}{\lping{i}}$.
Applying this fact inductively gives us
$\concr{T_{\vec{v}}}(\meanrep{\vec{v}}{\lping{0}}) =
\meanrep{(\concr{N}(\vec{v}))}{\lping{n}}$.  Because $\lping{0}$ and
$\lping{n}$ are the singleton partitionings, this gives us exactly the desired
relationship $\concr{T_{\vec{v}}}(\vec{v}) = \concr{N}(\vec{v})$.

\renewcommand\tikzwv[2]{
    \node (w#1) at (#2, 0) {$\vec{w}^{(#1)}$};
    \node (v#1) at (#2+1, 0) {$\vec{v}^{(#1)}$};
}
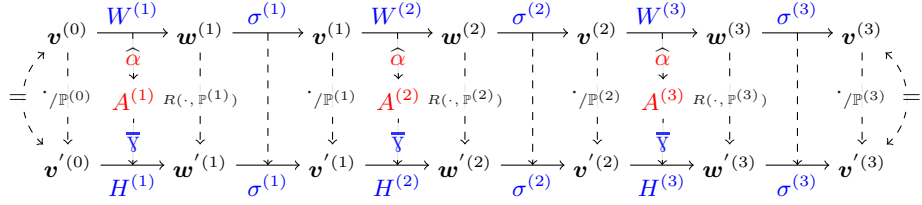
\begin{figure}
    \centering
\begin{tikzpicture}[scale=0.88]
    \node (v0) at (0, 0) {$\vec{v}^{(0)}$};
    \node (vp0) at (0, -2) {$\vec{v}^{'(0)}$};
    \foreach \i in {1,2,...,3}{
        \node (w\i) at (4*\i-2, 0) {$\vec{w}^{(\i)}$};
        \node (v\i) at (4*\i, 0) {$\vec{v}^{(\i)}$};
        \draw[->] (v\number\numexpr \i-1 \relax) -- (w\i) node[midway,above] (W\i) {$\cWi{\i}$};
        \draw[->] (w\i) -- (v\i) node[midway,above] (s\i) {$\csigmai{\i}$};

        \node (A\i) at (4*\i-3, -1) {$\AbsElemi{\i}$};

        \node (wp\i) at (4*\i-2, -2) {$\vec{w}^{'(\i)}$};
        \node (vp\i) at (4*\i,   -2) {$\vec{v}^{'(\i)}$};
        \draw[->] (vp\number\numexpr \i-1 \relax) -- (wp\i) node[midway,below] (H\i) {$\concr{H^{(\i)}}$};
        \draw[->] (wp\i) -- (vp\i) node[midway,below] (sp\i) {$\csigmai{\i}$};

        \draw[->] (A\i) -- (H\i)
        node[midway,fill=white,text opacity=1,opacity=0.9] {$\czeta$};
        \draw[->] (W\i) -- (A\i)
        node[midway,fill=white,text opacity=1,opacity=0.9] {$\haalpha$};
        \draw[->,dashed] (s\i) -- (sp\i);
    }
    \foreach \i in {0,1,...,3}{
        \draw[->,dashed] (v\i) -- (vp\i)
        node[midway,fill=white,text opacity=1,opacity=0.9] {$\meanrep{\cdot}{\lping{\i}}$};
    }
    \draw[<->,dashed] (v0) to [bend right=50] (vp0)
    node[fill=white,text opacity=1,opacity=0.9] at (-0.75, -1) {$=$};
    \draw[<->,dashed] (v3) to [bend left=50] (vp3)
    node[fill=white,text opacity=1,opacity=0.9] at (12.75, -1) {$=$};
    \foreach \i in {1,2,...,3}{
        \draw[->,dashed] (w\i) -- (wp\i)
        node[midway,fill=white,text opacity=1,opacity=0.9] {\tiny$R(\cdot, \lping{\i})$};
    }
\end{tikzpicture}
    \caption{
        Visualization of the relationships between concrete, abstract, and
        instantiated elements in the soundness proof. The original DNN's action
        on an input vector $\vec{v}^{(0)}$ is shown on the top row. This DNN is
        abstracted to an ANN, represented by the $\AbsElemi{i}$s on the middle
        row.  We will show that we can instantiate the ANN such that the
        instantiation has the same output as the original DNN on
        $\vec{v}^{(0)}$.
    }
    \label{fig:Soundness}
\end{figure}

\begin{figure}[t]
\begin{algorithm}[H]
    \DontPrintSemicolon
    \KwIn{
        An $n\times m$ matrix $M$. Partitionings $\ping^{in}$, $\ping^{out}$. A
        vector $\vec{v}$ with non-negative entries.  A vector $\vec{w}' \in
        R(M\vec{v}, \ping^{out})$.
    }
    \KwOut{
        A matrix $\concr{H} \in \cgamma(\haalpha(M, \ping^{in}, \ping^{out}))$
        such that $\concr{H}(\meanrep{\vec{v}}{\ping^{in}}) = \vec{w}'$.
    }
    $C, D \gets 0_{\abs{\ping^{in}}\times n}, 0_{\abs{\ping^{out}}\times m}$\;
    \For{$i = 1, 2, \ldots, \abs{\ping^{in}}$}{
        \For{$j \in \ping^{in}_i$}{
            $C_{j, i} \gets v_j / (\sum_{k \in \ping^{in}_i} v_k)$\;
        }
    }
    $\vec{w} \gets M\vec{v}$\;
    \For{$i = 1, 2, \ldots, \abs{\ping^{out}}$}{
        $a, b \gets \argmax_{p \in \ping^{out}_i} w_p, \argmin_{p \in \ping^{out}_i} w_p$\;
        $D_{a, i} \gets (w'_i - w_b) / (w_a - w_b)$\;
        $D_{b, i} \gets 1 - D_{a, i}$\;
    }
    $\vec{s} \gets (\abs{\ping^{in}_1}, \ldots, \abs{\ping^{in}_{\abs{\ping^{in}}}})$\;
    \returnKw{$\ScaleCols\left(D^T M C, \vec{s}\right)$}\;

    \caption{$\czeta(M, \ping^{in}, \ping^{out}, \vec{v}, \vec{w}')$}
    \label{alg:zeta}
\end{algorithm}
\end{figure}

\subsection{Vector Representatives}
\label{sec:VReps}
Our proof relies heavily on the concept of representatives.
\begin{definition}
    \label{def:CRep}
    Given a vector $\vec{v} = (v_1, v_2, \ldots, v_n)$ and a partitioning
    $\ping$ of~$\{1, 2, \ldots, n\}$ with $\abs{\ping} = k$, we define the
    \emph{convex representative set} of $\vec{v}$ under $\ping$ to be
    \\
    $
        R(\vec{v}, \ping) =
        \left\{
            (z_1, z_2, \ldots, z_k)
            \mid
            \forall j.
                \min_{h \in \ping_{j}} v_h
                \leq z_j \leq
                \max_{h \in \ping_{j}} v_h
        \right\}.
    $
\end{definition}
$R(\vec{v}, \ping)$ is referred to as $AV(\vec{v})$ in~\prabhakarcite{}, and is
always a box in $\mathbb{R}^k$.

One representative will be particularly useful, so we give it a specific
notation:
\begin{definition}
    \label{def:MRep}
    Given a vector $(v_1, v_2, \ldots, v_n)$ and a partitioning $\ping$ of $\{
        1, 2, \ldots, n \}$ with $\abs{\ping} = k$, we define the \emph{mean
    representative} of $v$ under $\ping$ to be
    \\
    $
        \meanrep{v}{\ping} =
        \left(
        \frac{\sum_{j \in \ping_{1}} v_j}{\abs{\ping_{1}}},
        \ldots,
        \frac{\sum_{j \in \ping_{k}} v_j}{\abs{\ping_{k}}}
        \right)
    $
\end{definition}

\begin{example}
    Consider the vector $\vec{v} \coloneqq (5, 6, 11, 2, 1)$ and the partitioning $\ping =
    \{\{1, 3\}, \{2, 4, 5\}\}$. Then we have
    $\meanrep{\vec{v}}{\ping} = ((5+11)/2, (6 + 2 + 1)/3) = (8, 3)$
    and
    $R(\vec{v}, \ping) = \{ (z_1, z_2) \mid z_1 \in [5, 11], z_2 \in [1, 6] \}$.
\end{example}

\subsection{Proof of Soundness Theorem}
The operation $\czeta$ presented in \pref{alg:zeta} shows how to instantiate an
abstract weight matrix such that it has input/output behavior corresponding to
that of the original DNN layer. We now prove the  correctness of \pref{alg:zeta}.

\begin{lemma}
    \label{lem:ZetaCorrect}
    Given any $\vec{w}' \in R(M\vec{v}, \ping^{in})$, a vector $\vec{v}$ with
    non-negative entries, and $\concr{H} = \czeta(M, \ping^{in}, \ping^{out},
    \vec{v}, \vec{w}')$, then $\concr{H} \in \cgamma(\haalpha(M, \ping^{in},
    \ping^{out}))$ and $\concr{H}(\meanrep{\vec{v}}{\ping^{in}}) = \vec{w}'$.
\end{lemma}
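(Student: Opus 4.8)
The plan is to prove the two conclusions separately: the membership $\concr{H} \in \cgamma(\haalpha(M, \ping^{in}, \ping^{out}))$ and the evaluation identity $\concr{H}(\meanrep{\vec{v}}{\ping^{in}}) = \vec{w}'$. Both reduce to checking that the matrices $C$ and $D$ built by \pref{alg:zeta} are legitimate PCMs and then carrying out a direct linear-algebra computation. The only genuine content is verifying the PCM constraints (which is where the hypotheses on $\vec{v}$ and $\vec{w}'$ enter) and tracking how the column-scaling interacts with the mean representative.

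First I would establish membership. The $i$th column of $C$ has entries $C_{j,i} = v_j / \sum_{k \in \ping^{in}_i} v_k$ for $j \in \ping^{in}_i$ and $0$ elsewhere; since $\vec{v}$ has non-negative entries these are non-negative, supported on $\ping^{in}_i$, and sum to $1$, so each column is a PCV and $C \in \PCMs(\ping^{in})$. For $D$, the $i$th column places weights $D_{a,i}$ and $D_{b,i} = 1 - D_{a,i}$ on the argmax index $a$ and argmin index $b$ of $\ping^{out}_i$. The key point is that the representative-set hypothesis $\vec{w}' \in R(M\vec{v}, \ping^{out})$ gives $w_b \le w'_i \le w_a$ on each output block, which forces $D_{a,i} = (w'_i - w_b)/(w_a - w_b) \in [0,1]$; hence each column of $D$ is a PCV and $D \in \PCMs(\ping^{out})$. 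With both PCMs valid, $\concr{H} = \ScaleCols(D^T M C, \vec{s})$ is exactly one of the elements placed into the set $S$ inside \pref{alg:ahat} (the scaling vector $\vec{s}$ equals the weight vector $w = (\abs{\ping^{in}_1}, \ldots)$ used there). Extensiveness of $\cgamma \circ \aalpha$ for a Galois connection gives $S \subseteq \cgamma(\aalpha(S))$, so $\concr{H} \in \cgamma(\haalpha(M, \ping^{in}, \ping^{out}))$.

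Next I would prove the evaluation identity by direct computation. Writing $u_i = (\meanrep{\vec{v}}{\ping^{in}})_i = (\sum_{j \in \ping^{in}_i} v_j)/\abs{\ping^{in}_i}$ and $s_i = \abs{\ping^{in}_i}$, the column-scaling is designed precisely so that $s_i u_i = \sum_{j \in \ping^{in}_i} v_j$; writing $T_i$ for this partition sum, the scaling cancels the averaging denominator and turns the mean back into a block sum. Expanding $\concr{H}\vec{u} = \ScaleCols(D^T M C, \vec{s})\vec{u}$ entrywise and using $T_i\,C_{j,i} = v_j$ for $j \in \ping^{in}_i$ (and $C_{j,i} = 0$ otherwise), the fact that each index $j$ lies in exactly one block gives $\sum_i T_i C_{j,i} = v_j$, so the expression collapses to $D^T(M\vec{v}) = D^T\vec{w}$. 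Finally, component $i$ of $D^T\vec{w}$ is the two-point convex combination $D_{a,i} w_a + D_{b,i} w_b$, and substituting the definition of $D_{a,i}$ yields $w_b + (w'_i - w_b) = w'_i$. Hence $\concr{H}(\meanrep{\vec{v}}{\ping^{in}}) = \vec{w}'$.

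I expect the main obstacle to be the degenerate denominators rather than the algebra: when $\sum_{k \in \ping^{in}_i} v_k = 0$ (so every $v_k = 0$ on that block) the formula for $C$ is undefined, and similarly when $w_a = w_b$ the formula for $D_{a,i}$ is undefined. In both cases I would choose an arbitrary valid PCV on the offending block; this preserves PCM membership, and it does not disturb the evaluation identity, because the affected terms are either multiplied by $T_i = 0$ or forced by $w'_i \in [w_b, w_a] = \{w_a\}$ to equal $w_a$ regardless of the weights chosen. The only other bookkeeping point is to keep the scaling vector $\vec{s}$ in $\czeta$ identical to the one used by $\haalpha$, so that $\concr{H}$ genuinely lies in the generating set $S$.
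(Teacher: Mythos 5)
Your proof is correct and follows essentially the same route as the paper's: verify that $C$ and $D$ are valid PCMs (using non-negativity of $\vec{v}$ and the hypothesis $\vec{w}' \in R(M\vec{v}, \ping^{out})$, respectively), then show that the column-scaling cancels the averaging in $\meanrep{\vec{v}}{\ping^{in}}$ so the evaluation collapses to $D^T M \vec{v}$, whose $i$th component is the two-point convex combination equal to $w'_i$. Your explicit handling of the degenerate denominators (a zero block sum in $C$, or $w_a = w_b$ in $D$) is a detail the paper's proof passes over silently, and your resolution of it is correct.
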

\begin{proof}
    To prove correctness of~\pref{alg:zeta}, it suffices to show that (i) $C$
    and $D$ are PCMs, and (ii) the returned matrix $\concr{H}$ satisfies the
    equality $\concr{H}(\meanrep{\vec{v}}{\ping^{in}}) = \vec{w}'$.

    $C$ is a PCM by construction: The $i$th column only has non-zero entries
    for rows that are in the $i$th partition. The sum of all entries in a
    column is
    $\sum_{j \in \ping^{in}_i} v_j / (\sum_{k \in \ping^{in}_i} v_k) = 1$.
    All entries are non-negative by assumption on $\vec{v}$.

    $D$ is also a PCM: The $i$th column only has two entries. It suffices to
    show that $D_{a, i}$ is in $[0, 1]$, which follows because $\vec{w}' \in
    R(M\vec{v}, \ping^{out})$ implies $w'_i$ is in between the minimum $b$ and
    maximum $a$.

    By associativity, line 11 is equivalent to returning $\concr{H} = D^TME$
    where $E = \ScaleCols(C, \vec{s})$.  Thus, to show that
    $\concr{H}(\meanrep{\vec{v}}{\ping^{in}}) = \vec{w}'$, it suffices to show
    (i) that $E(\meanrep{\vec{v}}{\ping^{in}}) = \vec{v}$, and (ii) that
    $D^TM\vec{v} = \vec{w}'$.

    Note that here $E_{j, i} = C_{j, i}\abs{\ping^{in}_i}$. Then to show (i),
    consider any index $j \in \ping^{in}_i$.  Then we find that the $j$th
    output component of $E(\meanrep{\vec{v}}{\ping^{in}})$ is\\
    $(v_j/(\sum_{k\in\ping^{in}_i v_k}))
    \abs{\ping^{in}_i}
    ((\sum_{k\in\ping^{in}_i v_k})/\abs{\ping^{in}_i}) = v_j$. Hence, the
    entire output vector is $\vec{v}$.

    To show (ii), note that each column of $D$ is exactly the convex
    combination that produces the output $w'_i$ from the maximum/minimum
    indices of $M\vec{v}$.

    In total then, the returned matrix is in $\cgamma(\haalpha(M, \ping^{in},
    \ping^{out}))$ and satisfies $\concr{H}(\meanrep{\vec{v}}{\ping^{in}}) = \vec{w}'$.
\end{proof}

The next lemma implies that we can always find such a $\vec{w}' \in R(M\vec{v},
\ping^{in})$ satisfying the relations in~\pref{fig:Soundness}.

\begin{lemma}
    \label{lem:PreimageOfMean}
    Let $\csigma$ be an activation function satisfying the WIVP, $\vec{w}$
    any vector, and $\ping$ a partitioning the dimensions of $\vec{w}$.
    Then there exists a vector \\ $\vec{w}' \in R(\vec{w}, \ping)$
    such that $\csigma(\vec{w}') = \meanrep{(\csigma(\vec{w}))}{\ping}$.
\end{lemma}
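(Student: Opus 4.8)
The plan is to prove this one block at a time, exploiting the fact that both the convex representative set $R(\vec{w}, \ping)$ and the mean representative $\meanrep{\cdot}{\ping}$ are defined component-wise across the blocks of $\ping$. Writing $k = \abs{\ping}$, I seek a vector $\vec{w}' = (w'_1, \ldots, w'_k)$ whose $j$th component lies in the interval $[\min_{h \in \ping_j} w_h, \max_{h \in \ping_j} w_h]$ and satisfies $\csigma(w'_j) = \frac{\sum_{h \in \ping_j} \csigma(w_h)}{\abs{\ping_j}}$. Collecting these components will simultaneously witness both $\vec{w}' \in R(\vec{w}, \ping)$ and the desired equation $\csigma(\vec{w}') = \meanrep{(\csigma(\vec{w}))}{\ping}$, since $\csigma$ acts component-wise on vectors.

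First I would fix a block $\ping_j$ and list its associated entries of $\vec{w}$ in non-decreasing order as $a_1 \leq a_2 \leq \cdots \leq a_{n_j}$, where $n_j = \abs{\ping_j}$, so that $a_1 = \min_{h \in \ping_j} w_h$ and $a_{n_j} = \max_{h \in \ping_j} w_h$. Because $\csigma$ satisfies the WIVP (\pref{def:WIVP}), there exists some $b_j \in [a_1, a_{n_j}]$ with $\csigma(b_j) = \frac{\sum_i \csigma(a_i)}{n_j}$. Since reordering the summands leaves the sum unchanged, this right-hand side equals $\frac{\sum_{h \in \ping_j} \csigma(w_h)}{\abs{\ping_j}}$, which is exactly the $j$th component of $\meanrep{(\csigma(\vec{w}))}{\ping}$. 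Setting $w'_j \coloneqq b_j$ therefore places $w'_j$ in the required min/max interval and achieves the required value under $\csigma$.

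Repeating this for every block $j = 1, \ldots, k$ and assembling the $w'_j$ into $\vec{w}'$ finishes the argument: the interval constraints give $\vec{w}' \in R(\vec{w}, \ping)$ by \pref{def:CRep}, and the per-block equalities give $\csigma(\vec{w}') = \meanrep{(\csigma(\vec{w}))}{\ping}$ by \pref{def:MRep}. There is no serious obstacle here; the only subtlety, and the single point at which the hypothesis is used, is aligning the two definitions: the WIVP is stated for a \emph{sorted} tuple $a_1 \leq \cdots \leq a_n$ and returns a witness in $[a_1, a_n]$, whereas $R$ is phrased via $\min$ and $\max$. Recognizing that sorting a block's entries makes these endpoints coincide is the key step, after which the result follows by a direct block-wise application of the WIVP.
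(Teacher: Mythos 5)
Your proof is correct and follows essentially the same route as the paper's: reduce to a single block (the paper does this via a WLOG, you do it by explicit block-wise assembly), sort that block's entries, and apply the WIVP to obtain the witness in the min/max interval. If anything, your write-up is slightly more careful than the paper's, which states the WIVP conclusion with $\csigma$ missing inside the sum (a typo) and leaves the reordering-of-summands point implicit.
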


\begin{proof}
    Because $\csigmai{i}$ is defined to be a component-wise activation
    function, we can assume WLOG that $\lping{i}$ has only a single partition,
    i.e., $\lping{i} = \{\{1, 2, \ldots, s^{(i)}\}\}$.

    In that case, label the components of $\vec{w}^{(i)}$ such that $w_1^{(i)}
    \leq w_2^{(i)} \leq \ldots \leq w_n^{(i)}$. Then the statement of the lemma
    is equivalent to the assertion that there exists some $b \in [w_1^{(i)},
    w_n^{(i)}]$ such that $\csigmai{i}(b) = (\sum_j w_j^{(i)}) / n$. But this
    is exactly the definition of the WIVP. Hence, by assumption that
    $\csigmai{i}$ satisfies the WIVP, we complete the proof.
\end{proof}

We are finally prepared to prove the soundness theorem. It is restated here for
clarity.

\ThmSound*
\begin{proof}
    A diagram of the proof is provided in~\pref{fig:Soundness}.

    Consider the $i$th layer. By~\pref{lem:PreimageOfMean}, there exists some
    vector $\vec{w}^{'(i)} \in R(\vec{w}^{(i)}, \lping{i})$ such that
    $\csigmai{i}(\vec{w}^{'(i)}) = \meanrep{\vec{v}}{\lping{i}}$. Furthermore,
    by~\pref{lem:ZetaCorrect} there exists some $\concr{H^{(i)}} \in
    \cgamma(\AbsElemi{i})$ such that
    $\concr{H^{(i)}}(\meanrep{\vec{v}^{(i-1)}}{\lping{i-1}}) = \vec{w}^{'(i)}$.
    Therefore, in total we can instantiate the $i$th abstract layer to
    $(\concr{H^{(i)}}, \csigmai{i})$, which maps
    $\meanrep{\vec{v}^{(i-1)}}{\lping{i-1}}$ to
    $\meanrep{\vec{v}^{(i)}}{\lping{i}}$.

    By applying this construction to each layer, we find an instantiation of
    the ANN that maps $\meanrep{\vec{v}^{(0)}}{\lping{0}}$ to
    $\meanrep{\vec{v}^{(n)}}{\lping{n}}$. Assuming $\lping{0}$ and $\lping{n}$
    are the singleton partitionings, then, we have that the instantiation maps
    $\vec{v}^{(0)} = \vec{v}$ to $\vec{v}^{(n)} = N(\vec{v})$, as hoped for.
    Hence, $\concr{N}(\vec{v}) \in \abstr{T}(\vec{v})$ for any such vector
    $\vec{v}$, and so the ANN overapproximates the original DNN.
\end{proof}

\section{Related Work}
\label{sec:Related}

The recent results by~\prabhakarcite{} are the closest to this paper.
\prabhakar{} introduce the notion of Interval Neural Networks and a sound
quotienting (abstraction) procedure when the ReLU activation function is used.
Prabhakar~et~al.~also proposed a technique for verification of DNNs using ReLU
activation functions by analyzing the corresponding INN using a MILP encoding.
Prabhakar~et~al.~leaves open the question of determining the appropriate
partitioning of the nodes, and their results assume the use of the ReLU
activation function and interval domain. We have generalized their results to
address the subtleties of other abstract domains and activation functions as
highlighted in~\pref{sec:Examples}.

There exists prior work~\cite{beheshti1998interval,patino2004interval,856123}
on models using interval-weighted neural networks. The goal of such approaches
is generally to represent uncertainty, instead of improve analysis time of a
corresponding DNN.  Furthermore, their semantics are defined using interval
arithmetic instead of the more-precise semantics we give in~\pref{sec:ANNs}.
Nevertheless, we believe that future work may consider applications of our more
general ANN formulation and novel abstraction algorithm to the problem of
representing uncertainty.

There have been many recent approaches exploring formal verification of DNNs
using abstractions. ReluVal \cite{wang2018formal} computes interval bounds on
the outputs of a DNN for a given input range. Neurify \cite{wang2018efficient}
extends ReluVal by using symbolic interval analysis. Approaches such as
DeepPoly~\cite{singh2019abstract} and $\text{AI}^2$~\cite{ai2} perform abstract
interpretation of DNNs using more expressive numerical domains such as polyhedra
and zonotopes. In contrast, Abstract Neural Networks introduced in this paper
use abstract values to represent the weight matrices of a DNN, and are a
different way of applying abstraction to DNN analysis. 

This paper builds upon extensive literature on numerical abstract domains
\cite{DBLP:conf/popl/CousotC77,mine2017tutorial,DBLP:conf/popl/CousotH78,mine2006octagon},
including libraries such as APRON~\cite{jeannet2009apron} and
PPL~\cite{BagnaraHZ08SCP}. Of particular relevance are
techniques for verification of floating-point computation
\cite{chen2008sound,ponsini2016verifying,ponsini2016verifying}.

Techniques for compression of DNNs reduce their size using
heuristics~\cite{iandola2016squeezenet,deng2020model,han2015deep}. They can
degrade accuracy of the network, and do not provide theoretical guarantees.
Gokulanathan~et~al.~\cite{DBLP:journals/corr/abs-1910-12396} use the Marabou
Verification Engine~\cite{katz2019marabou} to simplify neural networks so that
the simplified network is equivalent to the given network.
Shriver~et~al.~\cite{DBLP:journals/corr/abs-1908-08026} refactor the given DNN
to aid verification, though the refactored DNN is not guaranteed to be an
overapproximation.

\section{Conclusion and Future Directions}
\label{sec:Conclusion}

We introduced the notion of an \emph{Abstract Neural Network (ANN)}. The weight
matrices in an ANN are represented using numerical abstract domains, such as
intervals, octagons, and polyhedra. We presented a framework, parameterized by
abstract domain and DNN activation function, that performs layer-wise
abstraction to compute an ANN given a DNN. We identified necessary and
sufficient conditions on the abstract domain and the activation function that
ensure that the computed ANN is a sound over-approximation of the given DNN.
Furthermore, we showed how the input DNN can be modified in order to soundly
abstract DNNs using rare activation functions that do not satisfy the
sufficiency conditions are used. Our framework is applicable to DNNs that use
activation functions such as ReLU, Leaky ReLU, and Hyperbolic Tangent.  Our
framework can use convex abstract domains such as intervals, octagons, and
polyhedra. Code implementing our framework can be found at 
\url{https://github.com/95616ARG/abstract_neural_networks}.

The results in this paper provide a strong theoretical foundation for further
research on abstraction of DNNs. One interesting direction worth exploring is
the notion of completeness of abstract
domains~\cite{DBLP:journals/jacm/GiacobazziRS00} in the context of Abstract
Neural Networks. Our framework is restricted to convex abstract domains; the use
of non-convex abstract domains, such as modulo
intervals~\cite{DBLP:conf/IEEEpact/NakanishiJPF99} or donut
domains~\cite{DBLP:conf/vmcai/GhorbalIBMG12}, would require a different
abstraction algorithm. Algorithms for computing symbolic abstraction might show
promise~\cite{DBLP:conf/vmcai/RepsSY04,li2014symbolic,bilateral,thakur_reps_CAV12,reps_thakur_VMCAI16}.

This paper focused on feed-forward neural networks. Because convolutional
neural networks~(CNNs) are special cases of feed-forward neural networks,
future work can directly extend the theory in this paper to CNN models as well.
Such future work would need to consider problems posed by non-componentwise
activation functions such as MaxPool, which do not fit nicely into the
framework presented here.  Furthermore, extensions for recursive neural
networks~(RNNs) and other more general neural-network architectures seems
feasible.

On the practical side of things, it would be worth investigating the impact of
abstracting DNNs on the verification times. \prabhakarcite{}
demonstrated that their abstraction technique improved verification of DNNs. The
results in this paper are a significant generalization of the results of
Prabhakar~et~al., which were restricted to interval abstractions and ReLU
activation functions. We believe that our approach would similarly help scale up
verification of DNNs.

\noindent\textbf{Acknowledgments}
We thank the anonymous reviewers and Cindy Rubio Gonz\'alez for their feedback
on this work.

\bibliographystyle{splncs04}
\bibliography{main}

\begin{thebibliography}{10}
\providecommand{\url}[1]{\texttt{#1}}
\providecommand{\urlprefix}{URL }
\providecommand{\doi}[1]{https://doi.org/#1}

\bibitem{BagnaraHZ08SCP}
Bagnara, R., Hill, P.M., Zaffanella, E.: The parma polyhedra library: Toward a
  complete set of numerical abstractions for the analysis and verification of
  hardware and software systems. Sci. Comput. Program.  \textbf{72}(1-2),
  3--21 (2008). \doi{10.1016/j.scico.2007.08.001},
  \url{https://doi.org/10.1016/j.scico.2007.08.001}

\bibitem{beheshti1998interval}
Beheshti, M., Berrached, A., de~Korvin, A., Hu, C., Sirisaengtaksin, O.: On
  interval weighted three-layer neural networks. In: Proceedings 31st Annual
  Simulation Symposium {(SS} '98), 5-9 April 1998, Boston, MA, {USA}. pp.
  188--194. {IEEE} Computer Society (1998). \doi{10.1109/SIMSYM.1998.668487},
  \url{https://doi.org/10.1109/SIMSYM.1998.668487}

\bibitem{radford2019language}
Brown, T.B., Mann, B., Ryder, N., Subbiah, M., Kaplan, J., Dhariwal, P.,
  Neelakantan, A., Shyam, P., Sastry, G., Askell, A., Agarwal, S.,
  Herbert{-}Voss, A., Krueger, G., Henighan, T., Child, R., Ramesh, A.,
  Ziegler, D.M., Wu, J., Winter, C., Hesse, C., Chen, M., Sigler, E., Litwin,
  M., Gray, S., Chess, B., Clark, J., Berner, C., McCandlish, S., Radford, A.,
  Sutskever, I., Amodei, D.: Language models are few-shot learners. CoRR
  \textbf{abs/2005.14165} (2020), \url{https://arxiv.org/abs/2005.14165}

\bibitem{chen2008sound}
Chen, L., Min{\'{e}}, A., Cousot, P.: A sound floating-point polyhedra abstract
  domain. In: Ramalingam, G. (ed.) Programming Languages and Systems, 6th Asian
  Symposium, {APLAS} 2008, Bangalore, India, December 9-11, 2008. Proceedings.
  Lecture Notes in Computer Science, vol.~5356, pp. 3--18. Springer (2008).
  \doi{10.1007/978-3-540-89330-1\_2},
  \url{https://doi.org/10.1007/978-3-540-89330-1\_2}

\bibitem{DBLP:conf/popl/CousotC77}
Cousot, P., Cousot, R.: Abstract interpretation: {A} unified lattice model for
  static analysis of programs by construction or approximation of fixpoints.
  In: Graham, R.M., Harrison, M.A., Sethi, R. (eds.) Conference Record of the
  Fourth {ACM} Symposium on Principles of Programming Languages, Los Angeles,
  California, USA, January 1977. pp. 238--252. {ACM} (1977).
  \doi{10.1145/512950.512973}, \url{https://doi.org/10.1145/512950.512973}

\bibitem{DBLP:conf/popl/CousotH78}
Cousot, P., Halbwachs, N.: Automatic discovery of linear restraints among
  variables of a program. In: Aho, A.V., Zilles, S.N., Szymanski, T.G. (eds.)
  Conference Record of the Fifth Annual {ACM} Symposium on Principles of
  Programming Languages, Tucson, Arizona, USA, January 1978. pp. 84--96. {ACM}
  Press (1978). \doi{10.1145/512760.512770},
  \url{https://doi.org/10.1145/512760.512770}

\bibitem{deng2020model}
Deng, L., Li, G., Han, S., Shi, L., Xie, Y.: Model compression and hardware
  acceleration for neural networks: {A} comprehensive survey. Proceedings of
  the {IEEE}  \textbf{108}(4),  485--532 (2020).
  \doi{10.1109/JPROC.2020.2976475},
  \url{https://doi.org/10.1109/JPROC.2020.2976475}

\bibitem{856123}
Garczarczyk, Z.A.: Interval neural networks. In: {IEEE} International Symposium
  on Circuits and Systems, {ISCAS} 2000, Emerging Technologies for the 21st
  Century, Geneva, Switzerland, 28-31 May 2000, Proceedings. pp. 567--570.
  {IEEE} (2000). \doi{10.1109/ISCAS.2000.856123},
  \url{https://doi.org/10.1109/ISCAS.2000.856123}

\bibitem{ai2}
Gehr, T., Mirman, M., Drachsler{-}Cohen, D., Tsankov, P., Chaudhuri, S.,
  Vechev, M.T.: {AI2:} safety and robustness certification of neural networks
  with abstract interpretation. In: 2018 {IEEE} Symposium on Security and
  Privacy, {SP} 2018, Proceedings, 21-23 May 2018, San Francisco, California,
  {USA}. pp. 3--18. {IEEE} Computer Society (2018).
  \doi{10.1109/SP.2018.00058}, \url{https://doi.org/10.1109/SP.2018.00058}

\bibitem{DBLP:conf/vmcai/GhorbalIBMG12}
Ghorbal, K., Ivancic, F., Balakrishnan, G., Maeda, N., Gupta, A.: Donut
  domains: Efficient non-convex domains for abstract interpretation. In:
  Kuncak, V., Rybalchenko, A. (eds.) Verification, Model Checking, and Abstract
  Interpretation - 13th International Conference, {VMCAI} 2012, Philadelphia,
  PA, USA, January 22-24, 2012. Proceedings. Lecture Notes in Computer Science,
  vol.~7148, pp. 235--250. Springer (2012).
  \doi{10.1007/978-3-642-27940-9\_16},
  \url{https://doi.org/10.1007/978-3-642-27940-9\_16}

\bibitem{DBLP:journals/jacm/GiacobazziRS00}
Giacobazzi, R., Ranzato, F., Scozzari, F.: Making abstract interpretations
  complete. J. {ACM}  \textbf{47}(2),  361--416 (2000).
  \doi{10.1145/333979.333989}, \url{https://doi.org/10.1145/333979.333989}

\bibitem{DBLP:journals/corr/abs-1910-12396}
Gokulanathan, S., Feldsher, A., Malca, A., Barrett, C.W., Katz, G.: Simplifying
  neural networks with the marabou verification engine. CoRR
  \textbf{abs/1910.12396} (2019), \url{http://arxiv.org/abs/1910.12396}

\bibitem{Goodfellow:DeepLearning2016}
Goodfellow, I.J., Bengio, Y., Courville, A.C.: Deep Learning. Adaptive
  computation and machine learning, {MIT} Press (2016),
  \url{http://www.deeplearningbook.org/}

\bibitem{han2015deep}
Han, S., Mao, H., Dally, W.J.: Deep compression: Compressing deep neural
  network with pruning, trained quantization and huffman coding. In: Bengio,
  Y., LeCun, Y. (eds.) 4th International Conference on Learning
  Representations, {ICLR} 2016, San Juan, Puerto Rico, May 2-4, 2016,
  Conference Track Proceedings (2016), \url{http://arxiv.org/abs/1510.00149}

\bibitem{iandola2016squeezenet}
Iandola, F.N., Moskewicz, M.W., Ashraf, K., Han, S., Dally, W.J., Keutzer, K.:
  Squeezenet: Alexnet-level accuracy with 50x fewer parameters and
  {\textless}1mb model size. CoRR  \textbf{abs/1602.07360} (2016),
  \url{http://arxiv.org/abs/1602.07360}

\bibitem{jeannet2009apron}
Jeannet, B., Min{\'{e}}, A.: Apron: {A} library of numerical abstract domains
  for static analysis. In: Bouajjani, A., Maler, O. (eds.) Computer Aided
  Verification, 21st International Conference, {CAV} 2009, Grenoble, France,
  June 26 - July 2, 2009. Proceedings. Lecture Notes in Computer Science,
  vol.~5643, pp. 661--667. Springer (2009).
  \doi{10.1007/978-3-642-02658-4\_52},
  \url{https://doi.org/10.1007/978-3-642-02658-4\_52}

\bibitem{julian2016policy}
Julian, K.D., Kochenderfer, M.J., Owen, M.P.: Deep neural network compression
  for aircraft collision avoidance systems. CoRR  \textbf{abs/1810.04240}
  (2018), \url{http://arxiv.org/abs/1810.04240}

\bibitem{reluplex}
Katz, G., Barrett, C.W., Dill, D.L., Julian, K., Kochenderfer, M.J.: Reluplex:
  An efficient {SMT} solver for verifying deep neural networks. In: Majumdar,
  R., Kuncak, V. (eds.) Computer Aided Verification - 29th International
  Conference, {CAV} 2017, Heidelberg, Germany, July 24-28, 2017, Proceedings,
  Part {I}. Lecture Notes in Computer Science, vol. 10426, pp. 97--117.
  Springer (2017). \doi{10.1007/978-3-319-63387-9\_5},
  \url{https://doi.org/10.1007/978-3-319-63387-9\_5}

\bibitem{katz2019marabou}
Katz, G., Huang, D.A., Ibeling, D., Julian, K., Lazarus, C., Lim, R., Shah, P.,
  Thakoor, S., Wu, H., Zeljic, A., Dill, D.L., Kochenderfer, M.J., Barrett,
  C.W.: The marabou framework for verification and analysis of deep neural
  networks. In: Dillig, I., Tasiran, S. (eds.) Computer Aided Verification -
  31st International Conference, {CAV} 2019, New York City, NY, USA, July
  15-18, 2019, Proceedings, Part {I}. Lecture Notes in Computer Science, vol.
  11561, pp. 443--452. Springer (2019). \doi{10.1007/978-3-030-25540-4\_26},
  \url{https://doi.org/10.1007/978-3-030-25540-4\_26}

\bibitem{krizhevsky2012imagenet}
Krizhevsky, A., Sutskever, I., Hinton, G.E.: Imagenet classification with deep
  convolutional neural networks. In: Bartlett, P.L., Pereira, F.C.N., Burges,
  C.J.C., Bottou, L., Weinberger, K.Q. (eds.) Advances in Neural Information
  Processing Systems 25: 26th Annual Conference on Neural Information
  Processing Systems 2012. Proceedings of a meeting held December 3-6, 2012,
  Lake Tahoe, Nevada, United States. pp. 1106--1114 (2012),
  \url{http://papers.nips.cc/paper/4824-imagenet-classification-with-deep-convolutional-neural-networks}

\bibitem{li2014symbolic}
Li, Y., Albarghouthi, A., Kincaid, Z., Gurfinkel, A., Chechik, M.: Symbolic
  optimization with {SMT} solvers. In: Jagannathan, S., Sewell, P. (eds.) The
  41st Annual {ACM} {SIGPLAN-SIGACT} Symposium on Principles of Programming
  Languages, {POPL} '14, San Diego, CA, USA, January 20-21, 2014. pp. 607--618.
  {ACM} (2014). \doi{10.1145/2535838.2535857},
  \url{https://doi.org/10.1145/2535838.2535857}

\bibitem{maasrectifier}
Maas, A., Hannun, A., Ng, A.: Rectifier nonlinearities improve neural network
  acoustic models. In: Proceedings of the International Conference on Machine
  Learning (2013)

\bibitem{mine2006octagon}
Min{\'{e}}, A.: The octagon abstract domain. High. Order Symb. Comput.
  \textbf{19}(1),  31--100 (2006). \doi{10.1007/s10990-006-8609-1},
  \url{https://doi.org/10.1007/s10990-006-8609-1}

\bibitem{mine2017tutorial}
Min{\'{e}}, A.: Tutorial on static inference of numeric invariants by abstract
  interpretation. Found. Trends Program. Lang.  \textbf{4}(3-4),  120--372
  (2017). \doi{10.1561/2500000034}, \url{https://doi.org/10.1561/2500000034}

\bibitem{DBLP:conf/IEEEpact/NakanishiJPF99}
Nakanishi, T., Joe, K., Polychronopoulos, C.D., Fukuda, A.: The modulo
  interval: {A} simple and practical representation for program analysis. In:
  Proceedings of the 1999 International Conference on Parallel Architectures
  and Compilation Techniques, Newport Beach, California, USA, October 12-16,
  1999. pp. 91--96. {IEEE} Computer Society (1999).
  \doi{10.1109/PACT.1999.807422},
  \url{https://doi.org/10.1109/PACT.1999.807422}

\bibitem{oman2014converse}
Oman, G., et~al.: The converse of the intermediate value theorem: from conway
  to cantor to cosets and beyond. Missouri Journal of Mathematical Sciences
  \textbf{26}(2),  134--150 (2014)

\bibitem{Pytorch}
Paszke, A., Gross, S., Massa, F., Lerer, A., Bradbury, J., Chanan, G., Killeen,
  T., Lin, Z., Gimelshein, N., Antiga, L., Desmaison, A., K{\"{o}}pf, A., Yang,
  E., DeVito, Z., Raison, M., Tejani, A., Chilamkurthy, S., Steiner, B., Fang,
  L., Bai, J., Chintala, S.: Pytorch: An imperative style, high-performance
  deep learning library. In: Wallach, H.M., Larochelle, H., Beygelzimer, A.,
  d'Alch{\'{e}}{-}Buc, F., Fox, E.B., Garnett, R. (eds.) Advances in Neural
  Information Processing Systems 32: Annual Conference on Neural Information
  Processing Systems 2019, NeurIPS 2019, 8-14 December 2019, Vancouver, BC,
  Canada. pp. 8024--8035 (2019),
  \url{http://papers.nips.cc/paper/9015-pytorch-an-imperative-style-high-performance-deep-learning-library}

\bibitem{patino2004interval}
Pati{\~{n}}o{-}Escarcina, R.E., Bedregal, B.R.C., Lyra, A.: Interval computing
  in neural networks: One layer interval neural networks. In: Das, G., Gulati,
  V.P. (eds.) Intelligent Information Technology, 7th International Conference
  on Information Technology, {CIT} 2004, Hyderabad, India, December 20-23,
  2004, Proceedings. Lecture Notes in Computer Science, vol.~3356, pp. 68--75.
  Springer (2004). \doi{10.1007/978-3-540-30561-3\_8},
  \url{https://doi.org/10.1007/978-3-540-30561-3\_8}

\bibitem{ponsini2016verifying}
Ponsini, O., Michel, C., Rueher, M.: Verifying floating-point programs with
  constraint programming and abstract interpretation techniques. Autom. Softw.
  Eng.  \textbf{23}(2),  191--217 (2016). \doi{10.1007/s10515-014-0154-2},
  \url{https://doi.org/10.1007/s10515-014-0154-2}

\bibitem{inn}
Prabhakar, P., Afzal, Z.R.: Abstraction based output range analysis for neural
  networks. In: Wallach, H.M., Larochelle, H., Beygelzimer, A.,
  d'Alch{\'{e}}{-}Buc, F., Fox, E.B., Garnett, R. (eds.) Advances in Neural
  Information Processing Systems 32: Annual Conference on Neural Information
  Processing Systems 2019, NeurIPS 2019, 8-14 December 2019, Vancouver, BC,
  Canada. pp. 15762--15772 (2019),
  \url{http://papers.nips.cc/paper/9708-abstraction-based-output-range-analysis-for-neural-networks}

\bibitem{DBLP:conf/vmcai/RepsSY04}
Reps, T.W., Sagiv, S., Yorsh, G.: Symbolic implementation of the best
  transformer. In: Steffen, B., Levi, G. (eds.) Verification, Model Checking,
  and Abstract Interpretation, 5th International Conference, {VMCAI} 2004,
  Venice, Italy, January 11-13, 2004, Proceedings. Lecture Notes in Computer
  Science, vol.~2937, pp. 252--266. Springer (2004).
  \doi{10.1007/978-3-540-24622-0\_21},
  \url{https://doi.org/10.1007/978-3-540-24622-0\_21}

\bibitem{reps_thakur_VMCAI16}
Reps, T.W., Thakur, A.V.: Automating abstract interpretation. In: Jobstmann,
  B., Leino, K.R.M. (eds.) Verification, Model Checking, and Abstract
  Interpretation - 17th International Conference, {VMCAI} 2016, St. Petersburg,
  FL, USA, January 17-19, 2016. Proceedings. Lecture Notes in Computer Science,
  vol.~9583, pp. 3--40. Springer (2016). \doi{10.1007/978-3-662-49122-5\_1},
  \url{https://doi.org/10.1007/978-3-662-49122-5\_1}

\bibitem{DBLP:journals/corr/abs-1908-08026}
Shriver, D., Xu, D., Elbaum, S.G., Dwyer, M.B.: Refactoring neural networks for
  verification. CoRR  \textbf{abs/1908.08026} (2019),
  \url{http://arxiv.org/abs/1908.08026}

\bibitem{singh2019abstract}
Singh, G., Gehr, T., P{\"{u}}schel, M., Vechev, M.T.: An abstract domain for
  certifying neural networks. Proc. {ACM} Program. Lang.  \textbf{3}({POPL}),
  41:1--41:30 (2019). \doi{10.1145/3290354},
  \url{https://doi.org/10.1145/3290354}

\bibitem{bilateral}
Thakur, A.V., Elder, M., Reps, T.W.: Bilateral algorithms for symbolic
  abstraction. In: Min{\'{e}}, A., Schmidt, D. (eds.) Static Analysis - 19th
  International Symposium, {SAS} 2012, Deauville, France, September 11-13,
  2012. Proceedings. Lecture Notes in Computer Science, vol.~7460, pp.
  111--128. Springer (2012). \doi{10.1007/978-3-642-33125-1\_10},
  \url{https://doi.org/10.1007/978-3-642-33125-1\_10}

\bibitem{thakur_reps_CAV12}
Thakur, A.V., Reps, T.W.: A method for symbolic computation of abstract
  operations. In: Madhusudan, P., Seshia, S.A. (eds.) Computer Aided
  Verification - 24th International Conference, {CAV} 2012, Berkeley, CA, USA,
  July 7-13, 2012 Proceedings. Lecture Notes in Computer Science, vol.~7358,
  pp. 174--192. Springer (2012). \doi{10.1007/978-3-642-31424-7\_17},
  \url{https://doi.org/10.1007/978-3-642-31424-7\_17}

\bibitem{wang2018efficient}
Wang, S., Pei, K., Whitehouse, J., Yang, J., Jana, S.: Efficient formal safety
  analysis of neural networks. In: Bengio, S., Wallach, H.M., Larochelle, H.,
  Grauman, K., Cesa{-}Bianchi, N., Garnett, R. (eds.) Advances in Neural
  Information Processing Systems 31: Annual Conference on Neural Information
  Processing Systems 2018, NeurIPS 2018, 3-8 December 2018, Montr{\'{e}}al,
  Canada. pp. 6369--6379 (2018),
  \url{http://papers.nips.cc/paper/7873-efficient-formal-safety-analysis-of-neural-networks}

\bibitem{wang2018formal}
Wang, S., Pei, K., Whitehouse, J., Yang, J., Jana, S.: Formal security analysis
  of neural networks using symbolic intervals. In: Enck, W., Felt, A.P. (eds.)
  27th {USENIX} Security Symposium, {USENIX} Security 2018, Baltimore, MD, USA,
  August 15-17, 2018. pp. 1599--1614. {USENIX} Association (2018),
  \url{https://www.usenix.org/conference/usenixsecurity18/presentation/wang-shiqi}

\end{thebibliography}

\appendix
\onlyfor{arxiv}{\section{Proof of~\pref{thm:BinPCMs}}
\label{app:BinPCMs}

In~\pref{sec:AlgorithmComputable} we argued that, when the abstract domain used
is convex, $\haalpha$ and $\haalpha_{bin}$ produce the same result. Hence the
latter (which is computable) can be used in place of the former when
executing~\pref{alg:quotient}. We now prove this fact. To do so, we use the
following Lemma, which implies a similar claim for the PCMs:
\begin{lemma}
    \label{lem:EPCM}
    Let $\ping$ be a partitioning of $\{ 1, 2, \ldots, n \}$. Then every PCM $C
    \in \PCMs(\ping)$ is a convex combination of the binary PCMs
    $\BinPCMs(\ping)$.
\end{lemma}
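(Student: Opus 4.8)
The plan is to exploit the fact that the columns of a PCM are constrained completely independently of one another, so that $\PCMs(\ping)$ is a product of simplices whose vertices are exactly the binary PCMs. Let $k := \abs{\ping}$. First I would observe that the $i$th column $\vec{c}_i$ of any $C \in \PCMs(\ping)$ ranges over the probability simplex $\Delta_i$ of PCVs supported on the block $\ping_i$, and that the defining constraint on column $i$ involves only column $i$. Hence $\PCMs(\ping)$ is the Cartesian product $\prod_{i=1}^{k} \Delta_i$. The vertices of each $\Delta_i$ are precisely the standard basis vectors $e_j$ with $j \in \ping_i$, i.e.\ the binary PCVs of $\ping_i$; consequently the vertices of the product polytope are exactly the binary PCMs $\BinPCMs(\ping)$. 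Since a polytope is the convex hull of its vertices, every PCM is a convex combination of binary PCMs, giving existence immediately.

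To make the argument self-contained and to produce explicit coefficients, I would then carry out the ``multiplying-out'' construction directly. Write each column as $\vec{c}_i = \sum_{j \in \ping_i} C_{j,i}\, e_j$, so that the entries $C_{j,i}$ are nonnegative with $\sum_{j \in \ping_i} C_{j,i} = 1$ for every $i$. Index the binary PCMs by choice functions $f$ assigning to each block $i$ a representative $f(i) \in \ping_i$, and let $B^f$ be the binary PCM whose $i$th column is $e_{f(i)}$. Assign to $B^f$ the coefficient $\lambda_f := \prod_{i=1}^{k} C_{f(i),i}$. These coefficients are nonnegative, and they sum to $\prod_{i=1}^{k}\big(\sum_{j \in \ping_i} C_{j,i}\big) = 1$, so they form a convex combination.

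It then remains to verify $\sum_f \lambda_f B^f = C$, which I would check column by column. The $i$th column of $\sum_f \lambda_f B^f$ equals $\sum_{j \in \ping_i} C_{j,i}\big(\sum_{f : f(i)=j} \prod_{i' \neq i} C_{f(i'),i'}\big) e_j$, and the inner sum factors as $\prod_{i' \neq i}\big(\sum_{j' \in \ping_{i'}} C_{j',i'}\big) = 1$ by the column-sum condition, leaving exactly $\vec{c}_i$. The one delicate point — and the main thing to get right — is this marginalization: fixing $f(i)=j$ and summing the product of the remaining column entries over all completions of the choice function must collapse to $1$. Everything else is routine bookkeeping, so I expect the proof to be short once the product-of-simplices structure has been made explicit.
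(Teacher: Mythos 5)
Your proof is correct and takes essentially the same approach as the paper's: both rest on the observation that the defining constraints on a PCM act column-by-column, so that $\PCMs(\ping)$ is a product of simplices whose vertices are exactly the binary PCMs $\BinPCMs(\ping)$. The only difference is one of rigor --- your explicit choice-function indexing with coefficients $\lambda_f = \prod_{i} C_{f(i),i}$, together with the marginalization check, carries out in detail the recombination step that the paper compresses into the assertion that ``we can consider each column separately,'' so your write-up is, if anything, more complete than the published proof.
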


\begin{proof}
    Let $C \in \PCMs(\ping)$. It suffices to construct $C$ as a convex
    combination of the binary PCMs $B_i \in \BinPCMs(\ping)$.

    We first note that the columns in the binary PCMs are independent --- given
    two binary PCMs $B_1$ and $B_2$ the PCM $B_3$ formed by setting the $i$th
    column of $B_3$ to be either the $i$th column of $B_1$ or that of $B_2$
    arbitrarily is always a valid binary PCM. Because of this, we can consider
    each column separately, i.e., assume that $C$ and the $B_i$s have only a
    single column.

    In that case, the binary PCMs can be thought of as length-$n$ vectors each
    with all zero entries except for a single $1$ entry. Similarly, $C$ must
    have zeros in the entries where the binary PCMs all have zeros, and the
    non-zero entries must be positive and sum to one. But this is equivalent to
    stating that $C$ lies in the convex span of the binary PCMs $B_i$, as
    claimed.
\end{proof}

Then we can prove
\ThmBinPCMs*

\begin{proof}
    By convexity, it suffices to show that every $D^T M C$ with $C$ and $D$
    being PCMs can be written as a convex combination of the $F^T M E$
    matrices, with $E$ and $F$ being binary PCMs of $\ping^{in}$ and
    $\ping^{out}$ respectively. Note that the scaling is uniform, so we may
    ignore the $\ScaleCols$.

    By~\pref{lem:EPCM} we can write $C$ as a convex combination of the binary
    PCMs $E_j$, i.e., $C = \sum_j \alpha_j E_j$. We can similarly write $D$ as
    a convex combination of the binary PCMs $F_k$, i.e., $D = \sum_k \beta_k
    F_k$.  By distributivity, then, we have
    \[
        {\scriptsize
        D^TMC = \sum_j \sum_k \alpha_j \beta_k F_k^T M E_j}.
    \]
    The $F_k^T M E_j$ matrices are exactly the binary mergings used by
    $\haalpha_{bin}$, hence it suffices now to show that the $\alpha_j\beta_k$
    coefficients are non-negative and sum to one. They are non-negative by
    construction, and sum to one because
    \[
        {\scriptsize
        \sum_j \sum_k \alpha_j \beta_k = \sum_j \left(\alpha_j \sum_k \beta_k\right)
        = \sum_j \alpha_j\cdot1 = \sum_j \alpha_j = 1}.
    \]
    Therefore, any such $D^T M C$ is a convex combination of the binary
    mergings, as claimed, and hence
    $\haalpha(M, \ping^{in}, \ping^{out}, \AbsDomain) =
    \haalpha_{bin}(M, \ping^{in}, \ping^{out}, \AbsDomain)$.
\end{proof}

\section{The WIVP is Strictly Weaker Than the IVP}
\label{app:WIVP}

In~\pref{sec:ExampleSoundness} we stated that the WIVP was strictly weaker than
the IVP, i.e., there exists a function $f$ which satisfies the WIVP but not the
IVP. The below proof constructs such a function.

\begin{proof}
    Let $g$ be any \emph{strongly Darboux} function, e.g., Conway's Base 13
    function~\cite{oman2014converse}. The strongly Darboux property implies
    that the postimage of any non-empty open interval under $g$ is all of
    $\mathbb{R}$.

    Then, let $h$ be any surjective map from $\mathbb{R} \to \mathbb{Q}$. For
    example, we can take
    \[
        h(x) =
        \begin{cases}
            x &x \in \mathbb{Q} \\
            0 &x \not\in \mathbb{Q}.
        \end{cases}
    \]

    Finally, define
    \[
        f(x) \coloneqq h(g(x))
    \]
    and note that (i) $f(x) \in \mathbb{Q}$ for every $x$ while (ii) the
    postimage of any non-empty open interval under $f$ is exactly $\mathbb{Q}$.

    First, $f$ does \emph{not} satisfy the IVP because, for instance, $1$ and
    $2$ are in the image of $f$ but not $\sqrt{2}$.

    On the other hand, $f$ \emph{does} satisfy the WIVP. To see this, consider
    any $a_1 \leq a_2 \leq \cdots \leq a_n$. If they are all the same, then
    $f(a_1)$ is equal to the average. Otherwise, note that each $f(a_i) \in
    \mathbb{Q}$ hence
    \[
        \frac{\sum_i f(a_i)}{n} \in \mathbb{Q} = f((a_1, a_n))
    \]
    and so there exists a $b \in (a_1, a_n) \subseteq [a_1, a_n]$ with $f(b) =
    \sum_i f(a_i) / n$ as desired.

    Therefore, the WIVP is strictly weaker than the IVP.
\end{proof}

\section{Proof of~\pref{thm:NeedNonNegative}}
\label{app:NeedNonNegative}
In~\pref{sec:NeedNonNegative} we stated a necessary condition for sound
abstraction via~\pref{alg:quotient}, namely that the activation functions have
non-negative outputs. We now proceed to prove this theorem, restated below.

\ThmNeedNonNegative*

\begin{proof}
    Label $x, y \in \mathbb{R}$ such that $\csigma(x) < 0$ but $\csigma(y) > 0$.
    We will then take the DNN defined by the function
    \[
        \concr{N}(v) =
        \begin{bmatrix}
            1 & 0 \\
            0 & 1 \\
        \end{bmatrix}
        \csigma\left(
        \begin{bmatrix}
            x \\
            y \\
        \end{bmatrix}
        v \right)
    \]
    and the partitioning
    \[
        P = (\{\{1\}\}, \{\{1, 2\}\}, \{\{1\}, \{2\}\})
    \]
    which collapses all of the hidden dimensions. Then the corresponding
    interval ANN is given by
    \[
        \abstr{T_{\IntDom}}(v) =
        \begin{bmatrix}
            [0, 2] \\
            [0, 2] \\
        \end{bmatrix}
        \csigma\left(
        \begin{bmatrix}
            [x, y]
        \end{bmatrix}
        v \right).
    \]
    But then the components of $\concr{N}(1)$ have opposite signs, which can
    never happen for an instantiation of $\abstr{T_{\IntDom}}(1)$, hence
    $\concr{N}(1) \not\in \abstr{T_{\IntDom}}(1)$.  If we used any more-precise
    abstraction than intervals to get an ANN $\abstr{T}$ we would have
    $\abstr{T}(1) \subseteq \abstr{T_{\IntDom}}(1)$, hence still $\concr{N}(1)
    \not\in \abstr{T}(1)$.  Therefore, $\abstr{T}$ does \emph{not}
    over-approximate $\concr{N}$, completing the proof.
\end{proof}

\section{Proof of~\pref{thm:ShiftNetwork}}
\label{app:ShiftNetwork}

\ThmShiftNetwork*

\begin{proof}
    We will construct $\concr{N'}$ with layers $(\cWip{i}, \csigmaip{i})$.

    We will first define the activation functions $\csigmaip{i}$ used by
    $\concr{N'}$. For every $i < n$ set $\csigmaip{i}(x) \coloneqq
    \mathrm{max}(\csigmai{i}(x) + \abs{C}, 0)$. For $i = n$ set $\csigmaip{n}
    \coloneqq \csigmai{n}$.
    Note that, by assumption for any input in $R$ we will have $\csigmai{i}(x)
    + \abs{C} > 0$, so the $\mathrm{max}$ is just needed to ensure that the
    activation function is formally non-negative on the entirety of
    $\mathbb{R}$.

    We now define the weight matrices $\cWip{i}$ used by $\concr{N}$. For $i =
    1$, set $\cWip{i} = \cWi{i}$. For every $i > 1$, define $\cWip{i}$ such
    that
    \[
        \cWip{i}\vec{v} \coloneqq \cWi{i}(\vec{v} - \abs{C}) = \cWi{i}\vec{v} - \cWi{i}\abs{C},
    \]
    where we have abused notation to let $C$ here refer to the vector with
    every component fixed to the constant $C$.  Note that adding constant terms
    such as this can be done by adding a single additional dimension to each
    layer according to a standard transformation.

    We now argue that $\concr{N}$ and $\concr{N'}$ have the same output on any
    vector $\vec{v}$. Let $\vec{v}^{(1)}$, $\vec{v}^{(2)}$, \ldots,
    $\vec{v}^{(n)}$ be the post-activation vector after each layer in the
    original DNN as defined in~\pref{def:DNN}, and $\vec{v}^{(1)'}$, \ldots,
    $\vec{v}^{(n)'}$ be the same for the constructed DNN on an input $\vec{v}$.

    Then we have, by construction,
    $
        \vec{v}^{(1)'} = \vec{v}^{(1)}, \vec{v}^{(2)'} = \vec{v}^{(2)} + \abs{C},
    $
    and for all $2 < i < n$ we have inductively
    \[
        \begin{aligned}
            \vec{v}^{(i)'}
            &= \csigmaip{i}(\cWip{i}\vec{v}^{(i-1)'}) \\
            &= \max(\csigmai{i}(\cWi{i}(\vec{v}^{(i-1)'} - \abs{C})) + \abs{C}, 0) \\
            &= \max(\csigmai{i}(\cWi{i}((\vec{v}^{(i-1)} + \abs{C}) - \abs{C})) + \abs{C}, 0) \\
            &= \max(\csigmai{i}(\cWi{i}\vec{v}^{(i-1)}) + \abs{C}, 0) \\
            &= \max(\vec{v}^{(i)} + \abs{C}, 0).
        \end{aligned}
    \]
    If $\vec{v} \in R$, then we have by assumption that $C$ is a lower-bound
    for the value of $\vec{v}^{(i)}$ and hence this gives simply
    \[
        \vec{v}^{(i)'} = \max(\vec{v}^{(i)} + \abs{C}, 0) = \vec{v}^{(i)} + \abs{C}.
    \]
    Finally, for the last layer we have
    \[
        \begin{aligned}
            \vec{v}^{(n)'}
            &= \csigmaip{n}(\cWip{n}\vec{v}^{(n-1)'}) \\
            &= \csigmai{n}(\cWi{n}(\vec{v}^{(n-1)'} - \abs{C})) \\
            &= \csigmai{n}(\cWi{n}((\vec{v}^{(n-1)} + \abs{C}) - \abs{C})) \\
            &= \csigmai{n}(\cWi{n}\vec{v}^{(n-1)}) \\
            &= \vec{v}^{(n)}
        \end{aligned}
    \]
    as desired.
\end{proof}

We now provide an example of this construction.

\begin{example}
    In the LeakyReLU example from~\pref{sec:ExampleINNLeaky}, suppose we are
    only interested in the behavior of $f$ for $x_1 \in [-1, 1]$. On that
    domain, the output of the LReLU is \emph{at least} $-0.5$, hence we can
    take $C = -0.5$. Applying the construction from the theorem, we have
$f'(x_1) =
{\scriptsize\begin{bmatrix}
    1 & 1 & -1 \\
    1 & 0 & -0.5 \\
    0 & 1 & -0.5
\end{bmatrix}}
\mathrm{LReLU}'\left(
    {\scriptsize\begin{bmatrix}
        1 & 0 \\
        -1 & 0 \\
        0 & 0.5
    \end{bmatrix}
    \begin{bmatrix}
        x_1 \\
        1
    \end{bmatrix}};
    0.5
\right)$
where \\
$\mathrm{LReLU}'(x; 0.5) \coloneqq \mathrm{max}(\mathrm{LReLU}(x; 0.5) + 0.5, 0)$.

Now consider, for example, $x_1 = -1$. In the original network we had
\[
    f(-1) = {\scriptsize\begin{bmatrix}
        1 & 1 \\
        1 & 0 \\
        0 & 1
    \end{bmatrix}}
    \mathrm{LReLU}\left(
        {\scriptsize\begin{bmatrix}
            1 \\
            -1
        \end{bmatrix}
        \begin{bmatrix}
            -1
        \end{bmatrix}};
        0.5
    \right) =
    {\scriptsize\begin{bmatrix}
        1 & 1 \\
        1 & 0 \\
        0 & 1
    \end{bmatrix}
    \begin{bmatrix}
        -0.5 \\
        1
    \end{bmatrix}}
    =
    {\scriptsize
    \begin{bmatrix}
        0.5 \\
        -0.5 \\
        1
    \end{bmatrix}},
\]
noticing that the output of the LReLU had a negative component. In the new
network, on the other hand, we have
\[
    f'(x_1) =
    {\scriptsize
    \begin{bmatrix}
        1 & 1 & -1 \\
        1 & 0 & -0.5 \\
        0 & 1 & -0.5
    \end{bmatrix}}
    \mathrm{LReLU}'\left(
    {\scriptsize
        \begin{bmatrix}
            1 & 0 \\
            -1 & 0 \\
            0 & 0.5
        \end{bmatrix}
        \begin{bmatrix}
            -1 \\
            1
        \end{bmatrix}};
        0.5
    \right)
    =
    {\scriptsize\begin{bmatrix}
        1 & 1 & -1 \\
        1 & 0 & -0.5 \\
        0 & 1 & -0.5
    \end{bmatrix}
    \begin{bmatrix}
        0 \\
        1.5 \\
        1
    \end{bmatrix}}
    =
    {\scriptsize\begin{bmatrix}
        0.5 \\
        -0.5 \\
        1
    \end{bmatrix}} = f(1)
\]
where we can see that indeed the output of the new activation function
$\mathrm{LReLU}'$ is non-negative.
\end{example}

\section{Proof of~\pref{thm:NeedWIVP}}
\label{app:NeedWIVP}
In~\pref{sec:NeedWIVP} we claimed that activation functions satisfying the WIVP
is a necessary condition for soundness of~\pref{alg:quotient}. This is
formalized by the theorem below, which we now prove.

\ThmNeedWIVP*

\begin{proof}
    Let $a_1 \leq a_2 \leq \cdots \leq a_n$ be the points violating the WIVP
    for $\csigma$, i.e.\ there does \emph{not} exist any $b \in [a_1, a_n]$
    such that
    \[
        \csigma(b) = \frac{\sum_i a_i}{n}.
    \]

    We can then take the DNN given by
    \[
        \concr{N}(v) =
        \begin{bmatrix}
            1 & 1 & \cdots & 1 \\
        \end{bmatrix}
        \csigma\left(
        \begin{bmatrix}
            a_1 \\
            a_2 \\
            \vdots \\
            a_n \\
        \end{bmatrix}
        v \right)
    \]
    and the partitioning
    \[
        P = (\{\{1\}\}, \{\{1, \ldots, n\}\}, \{\{1\}\})
    \]
    which collapses all of the hidden dimensions. Then in the interval
    abstraction we get the ANN
    \[
        \abstr{T}(v) =
        \begin{bmatrix}
            [n, n] \\
        \end{bmatrix}
        \csigma\left(
        \begin{bmatrix}
            [a_1, a_n]
        \end{bmatrix}
        v \right).
    \]
    Now, consider $\concr{N}(1)$ and $\abstr{T}(1)$. We have by definition that
    \[
        \concr{N}(1) = \csigma(a_1) + \cdots + \csigma(a_n).
    \]
    Suppose for sake of contradiction that $\abstr{T}$ over-approximates
    $\concr{N}$. Then we must have $\concr{N}(1) \in \abstr{T}(1)$. Then there
    must be an assignment to the weights in $\abstr{T}$ which matches
    $\concr{N}$, i.e., then there must be a $b \in [a_1, a_n]$ and $m \in [n,
    n]$ such that
    \[
        m\csigma(b) = f(1) = \sum_i \csigma(a_i).
    \]
    But $m \in [n, n]$ implies $m = n$, hence in that case we have
    \[
        \csigma(b) = \frac{\sum_i \csigma(a_i)}{n}.
    \]
    But $b \in [a_1, a_n]$, contradicting the assumption that $\csigma$
    violates the WIVP at $a_1, \ldots, a_n$ and so completing the proof.
\end{proof}
}{}

\end{document}